\newtheorem{theorem}{Theorem}[section]
\theoremstyle{definition}
\newtheorem{definition}[theorem]{Definition}
\newcommand{\first}[1]{\cellcolor{green!100}{#1}}
\newcommand{\ours}[0]{DaD}
\definecolor{iccvblue}{rgb}{0.21,0.49,0.74}
\title{\ours: Distilled Reinforcement Learning for Diverse Keypoint Detection}
\author{Johan Edstedt$^1$
\quad
Georg Bökman$^2$
\quad
 Mårten Wadenbäck$^1$
\quad
 Michael Felsberg$^1$
 \\
{\normalsize $^1$Linköping University, $^2$Chalmers University of Technology}
}
\begin{document}
\twocolumn[{%
\centering
\renewcommand\twocolumn[1][]{#1}%
\maketitle
    \includegraphics[width=\linewidth]{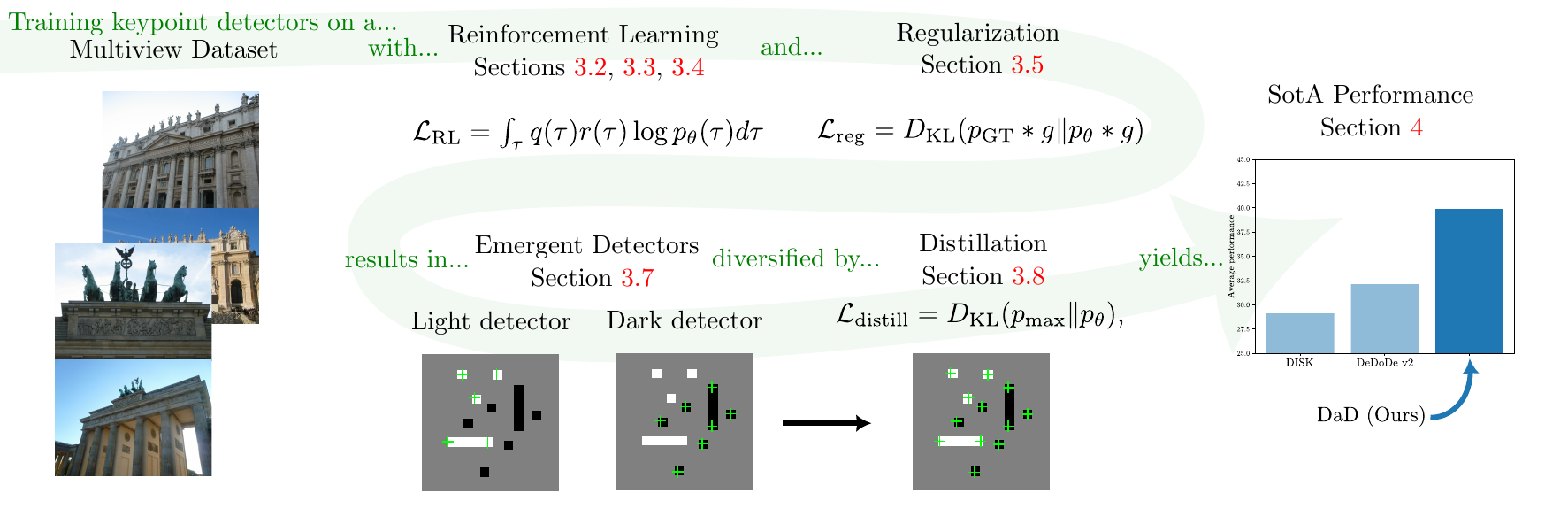}
    \captionof{figure}{
    \textbf{Overview of \ours.} 
    We present a method to train a keypoint detector that requires neither a descriptor, nor supervision from SfM tracks, yet achieves SotA performance.
    We use reinforcement learning (\Cref{sec:rl,sec:reward,sec:sampling}) to iteratively improve our detector through a two-view repeatability reward in combination with a simple regularization objective (\Cref{sec:regularization}). We find that two types of detectors, which detect only light and dark keypoints respectively, emerge from optimizing the RL objective (\Cref{sec:emerge}). This is problematic, as many repeatable keypoints are missed. We tackle this by combining the detectors through point-wise maximum knowledge distillation (\Cref{sec:distill}) to a final powerful and diverse keypoint detector, which we call~\ours. \ours~sets a new state-of-the-art for keypoint detection, as our experiments in \Cref{sec:results} show.}
    \label{fig:overview}}]
\maketitle
\begin{abstract}
Keypoints are what enable Structure-from-Motion (SfM) systems to scale to thousands of images.
However, designing a keypoint detection objective is a non-trivial task, as SfM is non-differentiable.
Typically, an auxiliary objective involving a descriptor is optimized. 
This however induces a dependency on the descriptor, which is undesirable.
In this paper we propose a fully self-supervised and descriptor-free objective for keypoint detection, through reinforcement learning.
To ensure training does not degenerate, we leverage a balanced top-K sampling strategy.
While this already produces competitive models, we find that two qualitatively different types of detectors emerge, which are only able to detect light and dark keypoints respectively.
To remedy this, we train a third detector, \ours, that optimizes the Kullback--Leibler divergence of the pointwise maximum of both light and dark detectors.
Our approach significantly improve upon SotA across a range of benchmarks.
Code and model weights are publicly available at \url{https://github.com/parskatt/dad}.
\vspace{-3em}
\vfill
\end{abstract}
    
\section{Introduction}
\label{sec:intro}
\begin{figure*}
    \centering
    \includegraphics[width=\linewidth]{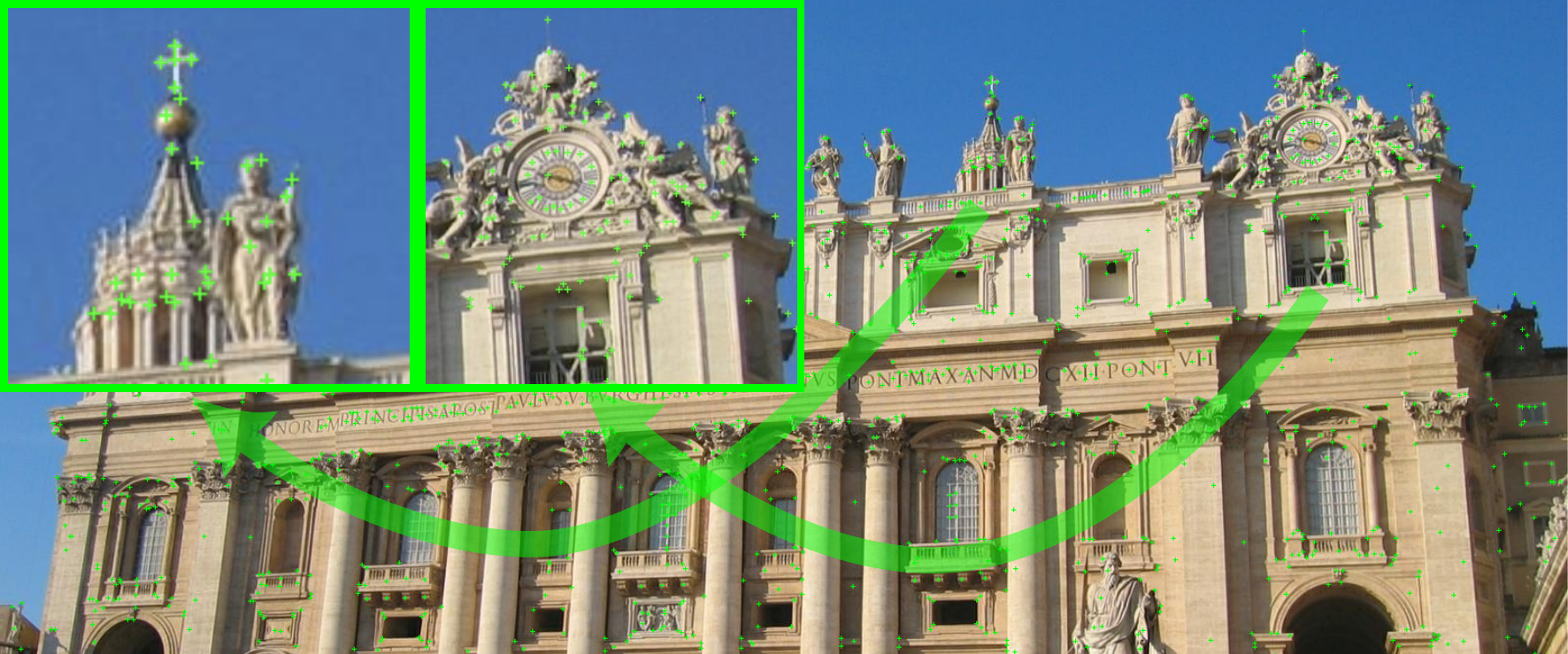}
    \caption{
    \textbf{Qualitative example of \ours~keypoint detections.} We find a fundamental issue with previous rotation invariant self-supervised detectors which only lets them detect \emph{light} or \emph{dark} types of keypoints (see \Cref{sec:emerge} for details). We remedy this through point-wise maximum knowledge distillation (see \Cref{sec:distill}). As can be seen in the figure, our approach has no such issue, see e.g., the light keypoints on the cross (left zoom in), and the dark keypoints on the building edge (right zoom in). A qualitative comparison with previous self-supervised detectors (where this issue occurs) is presented in \Cref{suppl:qualitative-comparison}.}
    \label{fig:qualitative}
\end{figure*}
 Structure-from-Motion (SfM) systems~\citep{opensfm, theia, schoenberger2016sfm, pan2024glomap} require repeatable observations of certain 3D points, called \emph{keypoints}. Models $p_{\theta}$ detecting sets of keypoints $\mathcal{K} = \{\mathbf{x}_k\in\mathbb{R}^2\}^K_{k=1} \sim p_{\theta}(\mathbf{x}|\mathbf{I})$ are called detectors. Traditionally~\citep{lowe2004distinctive, detone2018superpoint} keypoints are described by a \emph{descriptor} $d_{\theta}$, whose \emph{feature vectors} $d_{\theta}(\mathcal{K}|\mathbf{I})\in \mathbb{R}^{K\times D}$ are matched, either by nearest neighbours~\citep{lowe2004distinctive}, or by neural networks~\citep{sarlin20superglue, lindenberger2023lightglue}. While this setup is convenient, it forces a dependency between the detector, the descriptor, and the matcher~\citep{edstedt2024dedode}, as the detector is typically trained to detect mutual nearest neighbours of the the descriptor~\citep{tyszkiewicz2020disk, zhao2022alike, Zhao2023ALIKED}, the descriptor shares weights with the detector~\citep{dusmanu2019d2}, and the matcher is conditioned on the descriptor~\citep{sarlin20superglue,lindenberger2023lightglue}.

As an alternative, there has been a push towards \emph{detector-free} matching. In this paradigm, dense feature vectors are computed, matched densely at a coarse resolution, and further refined at higher resolution, either sparsely~\citep{sun2021loftr, chen2022aspanformer, wang2024efficient} or densely~\citep{truong2023pdc, edstedt2023dkm, edstedt2024roma}. These approaches have significantly higher accuracy than their detector-descriptor counterparts. 
However, since they lack keypoints, running larger scale reconstruction, while performant~\citep{he2024detector}, becomes computationally cumbersome and adds additional complexity.

In the same vein, decoupling the detector from the descriptor has also gained popularity~\cite{cieslewski2019matching,cieslewski2019sips,barroso2019key,lee2022self,santellani2023strek,edstedt2024dedodev2}.
While the decoupled approach has several advantages~\citep{li2022decoupling}, it is not obvious how to formulate a fully self-supervised objective without implicitly using a descriptor.
In the state-of-the-art method DeDoDe v2~\citep{edstedt2024dedodev2}, a heuristic combination of a detection prior (from SfM tracks), and a self-supervised objective is used.
This is unsatisfactory in the sense that the detector still depends on the descriptor used in the SfM pipeline, thus biasing the detector.

A promising direction for self-supervised keypoint detection is through reinforcement learning~\citep{sutton2018reinforcement}.
ReinforcedFP~\citep{bhowmik2020reinforced} use a pose-based reward to finetune SuperPoint~\citep{detone2018superpoint}, however, the gradients from this reward are not sufficiently informative to train a model from scratch~\citep{tyszkiewicz2020disk}.
DISK~\citep{tyszkiewicz2020disk} instead optimize a repeatability objective, which yields significantly better gradients.
In order to be able to optimize an on-policy objective, they perform patch-wise sampling and rejection, which leads to artifacts near patch boundaries~\citep{santellani2023strek}.
S-TREK~\citep{santellani2023strek} instead propose to use an off-policy version of the policy gradient objective, with a sequential sampling procedure where they avoid repeated sampling by using a sampling avoidance radius during training. 
However, this sampling procedure is dropped at inference time and replaced with non-max-suppression and top-K, creating an alignment gap from training to inference.

In this paper, we train a self-supervised detector through reinforcement learning, while avoiding the above issues by a balanced top-K sampling strategy.
Intriguingly, on this quest, we discover a fundamental issue with rotation invariant reinforcement learning-like objectives which causes detectors to detect either only \emph{light} keypoints or \emph{dark} keypoints. To remedy this issue, we propose a point-wise maximum distillation objective, that, given a light detector and a dark detector, distills them into a diverse detector that detects both types of keypoints.
We call our resulting detector \textbf{\ours}.
An overview and qualitative example of \ours~is presented in~\Cref{fig:overview,fig:qualitative}, and a full description in~\Cref{sec:method}.
We conduct thorough experiments in~\Cref{sec:results} which show that \ours~sets a new SotA across the board.
Code, model weights, and data will be made publicly available.

\noindent To summarize, our \textbf{main contributions} are:
\begin{enumerate}[label=\textbf{\alph*)}]
\item We propose a reinforcement learning detection objective leveraging an aligned keypoint sampling procedure. This is described in~\Cref{sec:rl,sec:reward,sec:sampling,sec:regularization,sec:objective}.
\item We make empirical observations of emergent detector types from keypoint detection through reinforcement learning in~\Cref{sec:emerge}, which leads to
\item our proposed point-wise maximum distillation approach for training a more diverse detector in~\Cref{sec:distill}.
\item We obtain new state-of-the-art results for detectors over keypoint budgets spanning from 512 to 8192 keypoints, presented in~\Cref{sec:results}.
\end{enumerate}

\newpage

\section{Background}
\label{sec:background}
The goal of Structure-from-Motion (SfM) is to estimate the 3D structure of a scene in the form of a point cloud 
 and a set of cameras, 
from a (typically unordered) set of RGB images 
 \begin{equation}
     \mathcal{I} = \{\mathbf{I}^n \in \mathbb{R}^{H^n \times W^n \times 3}\}^N_{n=1}.
 \end{equation}
The most common approach to solve this problem is to detect, for each image $\mathbf{I}^n$, a set of $K^n$ keypoints 
\begin{equation}
\mathcal{K}^n = \{\mathbf{x}_{k}^n\in \mathbb{R}^2\}_{k=1}^{K^n} \sim p_{\theta}(\mathbf{x}|\mathbf{I}^n).    
\end{equation}
The keypoints are matched with a two-view matcher $m_{\theta}$ as 
\begin{equation}
\label{eq:matching}
 \mathcal{M}^{A\leftrightarrow B} = m_{\theta}(p_{\theta}, d_{\theta}| \mathbf{I}^{A}, \mathbf{I}^{B}) \in \mathbb{R}^{M\times 2\times 2}.   
\end{equation}

From the matches $\mathcal{M}^{A\leftrightarrow B}$, two-view geometric models, such as a Fundamental matrix $\mathbf{F}^{A\to B}$, Essential matrix $\mathbf{E}^{A\to B}$, or a Homography $\mathbf{H}^{A\to B}$ can be robustly estimated by RANSAC~\citep{fischler1981}. The two-view geometries serve as the initialization for either incremental~\citep{schoenberger2016sfm} or global~\cite{pan2024glomap} SfM optimizers, the workings of which are beyond the scope of this paper. 

Nevertheless, we will note the fact that repeatable keypoints are \emph{key} for these optimizers to converge well, as the point cloud is intimately connected to the keypoints in the form of \emph{feature tracks} $\tau$, and longer tracks, which are enabled by consistent detection, provide much stronger constraints on the 3D model. Note also that for practical reasons we can not select all pixels as keypoints, as this makes the optimization intractable.

We next describe our approach.

\section{Method}
\label{sec:method}
\subsection{Formulation of Keypoint Detection}
\label{sec:keypoint}
We aim to train a neural network $p_{\theta}$ that, conditioned on an image $\mathbf{I}^n$, infers a probability distribution, from which a set of keypoints $\mathcal{K}^n$ can be sampled.
In practice, the model produces a scoremap $\mathbf{S}  \in \mathbb{R}^{H\times W}$, which we view as logits of a probability distribution defined over the image grid and define the model keypoint distribution as
\begin{equation}
    p_{\theta}(\mathbf{x}|\mathbf{I}) \coloneq \text{softmax}(\mathbf{S}).
\end{equation}
At inference time we sample from this distribution, i.e.,
\begin{equation}
    \mathcal{K} \sim p_{\theta},
\end{equation}
by some (often deterministic) sampler.
The goal of keypoint detection from this perspective then, is to learn a distribution $p_{\theta}$, that when sampled, maximizes the quality of the resulting SfM reconstruction. 
However, this is not straightforward, as we discuss next.
\subsection{Keypoints via Reinforcement Learning}
\label{sec:rl}
Defining keypoints is difficult.
In principle they could be defined as points that, when jointly detected, give optimal reconstruction quality. Unfortunately, backpropagating through SfM pipelines is in general intractable.
However, we may be able to define a suitable reward, $r$, for detections that we by some metric deem good.
Typically one chooses to maximize the expected reward
\begin{equation}
    \max_{\theta}\mathbb{E}_{\tau\sim p_{\theta}}[r].
\end{equation}

We would thus like to differentiate this objective, i.e.,
\begin{equation}
    \nabla_{\theta}\mathbb{E}_{\tau\sim p_{\theta}}[r].
\end{equation}
We can rewrite the objective as
\begin{equation}
    \nabla_{\theta}\mathbb{E}_{\tau\sim p_\theta}[r] = \nabla_{\theta}\int_{\tau}rp_{\theta}(\tau)d\tau = \int_{\tau}r\nabla_{\theta}p_{\theta}(\tau)d\tau,
\end{equation}
where $\tau$ are feature tracks.
Using $\nabla \log p = \frac{\nabla p}{p}$ we have
\begin{equation}
    \nabla_{\theta}\mathbb{E}_{\tau\sim p_\theta}[r] = \int_{\tau}r(\tau)p_{\theta}(\tau)\nabla_{\theta}\log p_{\theta}(\tau)d\tau.
\end{equation}
This we can rewrite in expectation form as
\begin{align}
    \mathbb{E}_{\tau\sim p_\theta}[r(\tau)\nabla_{\theta}\log p_{\theta}(\tau)]. %
\end{align}
In this paper we will consider a modified version of this objective, following S-TREK~\citep{santellani2023strek}, where we take the expectation over another policy $q$ as
\begin{align}
    \mathbb{E}_{\tau\sim q}[r(\tau)\nabla_{\theta}\log p_{\theta}(\tau)]. %
\end{align}
The implications of on-policy vs off-policy is further discussed in~\Cref{suppl:off-policy}.
We will consider the two-view version, which reads
\begin{align}
    \sum_{m=1}^Mr(\mathbf{x}_m^A, \mathbf{x}_m^B) \nabla_{\theta}(\log p_{\theta}(\mathbf{x}_m^A|\mathbf{I}^A) + \log p_{\theta}(\mathbf{x}_m^B|\mathbf{I}^B)).
\end{align}
where $\{(\mathbf{x}_m^A, \mathbf{x}_m^B)\}_{m=1}^M = \mathcal{M}^{A\rightarrow B} \sim q$.
We can rewrite this as a loss as
\begin{equation}
    \mathcal{L}_{\rm RL} = \sum_{m=1}^Mr(\mathbf{x}_m^A, \mathbf{x}_m^B) (\log p_{\theta}(\mathbf{x}_m^A|\mathbf{I}^A) + \log p_{\theta}(\mathbf{x}_m^B|\mathbf{I}^B)).
\end{equation}
While this objective is very simple in principle, in practice both the design of the reward function and the sampling of $\mathcal{M}^{A\rightarrow B}$ makes a significant difference on the quality of the resulting models. 
We next go into detail on how we choose the reward, followed by the sampling and matching.

\subsection{Reward Function}
\label{sec:reward}
We use a keypoint repeatability-based reward which is computed by the pixel distance between the detections as
\begin{align}
    r(\tau) = r(\mathbf{x}_m^A, \mathbf{x}_m^B) = f(\lVert\mathcal{P}^{A\to B}(\mathbf{x}_m^A, z^A), \mathbf{x}_m^B \rVert).
\end{align}
where $\mathcal{P}^{A\to B}$ is a function transferring points between $\mathbf{I}^A$ and $\mathbf{I}^B$, $z^A$ are the image depths (only known/used during training), and $f$ is a monotonically decreasing function. In practice, we choose
\begin{equation}
    f(d) = \begin{cases}
        1, \quad d < \tau, \\
        0, \quad \text{else,}
    \end{cases}
\end{equation}
where we set $\tau$ to $0.25 \%$ of the image height. 
We additionally experimented with a linearly decreasing reward, similar to the one presented by~\citet{santellani2023strek}, which gives a large reward for very accurate keypoints, a smaller reward for less accurate keypoints, and reaches $0$ at $\tau$, but found that while such a reward does produce keypoints that are more sub-pixel accurate, it produces worse pose estimates, possibly due to being less diverse.
Finally, we normalize the reward over the image
\begin{equation}
\label{eq:fraction}
        r_{\text{pair}} = \frac{r}{\mathbb{E}[r]+\varepsilon},
\end{equation}
where the expectation is taken over each image pair $\mathbf{I}^A,\mathbf{I}^B$, and we set $\varepsilon=0.01$ to ensure stable gradients (if $\mathbb{E}[r] \approx 0$ the gradient will be arbitrarily large). 
This approach is related to the \emph{advantage}, typically defined as
\begin{equation}
    a = r - \mathbb{E}[r].
\end{equation}
However, we found the fraction form defined in \Cref{eq:fraction} to work better than the subtractive form.
We believe this is due to the negative rewards ``pushing'' down the sampling probabilities of rare keypoints, leading to early exploitation and failure to learn more generalizable points. 
We additionally experimented with incorporating a pose-based reward, which is described in more detail in \Cref{suppl:pose-reward}. 
However, we found the gains from incorporating it to be negligible, and thus use only the repeatability-based reward for our main approach.

\subsection{Keypoint Sampling and Matching}
We now describe how to sample $\mathcal{K}$, and consequently $\mathcal{M}^{A\rightarrow B}$, from $p_{\theta}$. A naive way is to draw $K$ samples independently at random, i.e., on-policy learning. This is problematic, as this can lead to distributional collapse to a single keypoint, or a cluster of keypoints. We would thus like the sampling to ensure three main properties
\begin{enumerate}
    \item \textbf{Sparsity:} The rewarded keypoints must be sparse. Detecting every point within an area, or on a line, does not constitute repeatable keypoints.
    \item \textbf{Diversity:} We want to detect a sufficiently large set of diverse keypoints, covering the scene.
    \item \textbf{Priority:} If a keypoint dominates another keypoint, i.e., its probability of detection is larger than the other keypoint in both images, we want this keypoint to always be prioritized over the other keypoint.
\end{enumerate}

\paragraph{Sampling Strategy.}
\label{sec:sampling}
To this end we find that the inference sampling strategy used in DeDoDe v2~\citep{edstedt2024dedodev2} is surprisingly effective.
Note that we use this sampling during \emph{both} training and inference, while it is used only during inference in DeDoDe v2.
We recap the strategy here.

We first estimate a smoothed kernel density estimate of the $p_{\theta}(\mathbf{x})$ as 
\begin{equation}
    p^{g}_{\theta}(\mathbf{x}) = (p_{\theta} * g)(\mathbf{x}),
\end{equation}
where $*$ is the convolution operator, and $g(\mathbf{x})$ is a Gaussian filter with standard deviation $\approx 2 \%$ of the image.

This is used to produce a balanced distribution \begin{equation}
    p^{\text{KDE}}_{\theta}(\mathbf{x}) \propto p_{\theta}(\mathbf{x}) \cdot p^{g}_{\theta}(\mathbf{x})^{-1/2}.
\end{equation}

This distribution is additionally filtered by non-local-maximima-supression (NMS) using a window size of 3 as
\begin{equation}
    q(\mathbf{x}) \propto \text{NMS}(p^{\text{KDE} }_{\theta}(\mathbf{x})).
\end{equation}
Finally, from this distribution we deterministically sample the top-K highest scoring pixels as $\mathcal{K} = \text{top-K}(q)$.

This process fulfills our desired properties of sampling. 
NMS makes sure that the detector does not degenerate to detect entire areas or lines (sparse).
The KDE downweighting makes sure that all parts of the image get keypoints sampled (diverse).
Perhaps more subtly, top-K sampling ensures the third property (priority). 
While we could in principle sample $q$ randomly, this causes unnecessary variance in the estimate. 
To see this, consider a toy-case where we have one ``true'' keypoint, and the rest is noise. 
As long as the probability of the true keypoint is $p<1$ we have a $(1-p)^K$ probability of not sampling the true keypoint. When $p\ll1$, as is common, this means that there is a large probability that the true keypoint would not be sampled. 
Using top-K sampling solves this problem.
As for the choice of $K$ itself, we simply set it as $K=512$, which we found to work well in general.

\paragraph{Matches $\mathcal{M}$.}
Once keypoints $\mathcal{K}^A$ and $\mathcal{K}^B$ have been sampled, we match these using the depth $z^A$ and $z^B$. 
Specifically, for each keypoint sampled in $\mathbf{I}^A$ we find its nearest neighbour in $\mathbf{I}^B$ and vice-versa. As in DISK~\citep{tyszkiewicz2020disk}, this results in $\mathcal{M}^{A\to B}$ and $\mathcal{M}^{B\to A}$, where we detach the gradient for $p_{\theta}(\mathbf{x}^B)$ for $\mathcal{M}^{A\to B}$ and vice-versa, and sum the two losses. In areas with no consistent depth (due to failure of MVS, or occlusion), we do not conduct any matching, and $\log p_{\theta}$ is computed with a masked log-softmax operation, to ensure that keypoints which are non-covisible are not punished.

\subsection{Regularization}
\label{sec:regularization}
In addition to the RL objective, we additionally employ regularization on the predicted scoremap as in DeDoDe~\citep{edstedt2024dedode}. Following DeDoDe we compute the regularization loss as
\begin{equation}
    \mathcal{L}_{\text{reg}}(p_{\theta}, p_{\rm depth}) = D_{\rm KL}(p_{\rm depth}*g \Vert p_{\theta} * g ),
\end{equation}
where $p_{\rm depth}$ is the per-image successful depth estimate indicator distribution, $g$ a Gaussian distribution with $\sigma \approx 12.5$ pixels, and $D_{\rm KL}$ the Kullback-Leibler divergence.

\subsection{Initial Learning Objective}
\label{sec:objective}
Our full learning objective is an unweighted combinatation of the RL objective and the regularization objective, as
\begin{equation}
\label{eq:main-objective}
    \mathcal{L}_{\text{full}} = \mathcal{L}_{\rm RL} + \mathcal{L}_{\rm reg}.
\end{equation}
We found that the performance as a function of the weight of the regularization objective was quite stable to up to about an order of magnitude.
However, it turns out that this objective is optimized by two qualitatively different types of detectors, which we discuss next.

\subsection{Emergent Detector Types}
\label{sec:emerge}
We find that several qualitatively different types of detectors emerge, when trained with \Cref{eq:main-objective} combined with rotation augmentation. 
\paragraph{Light and dark detectors:}
Our main focus will be on two types of detectors which we call \emph{light} and \emph{dark} detectors. A example of the types of keypoints these detectors find is shown in~\Cref{fig:dark-light}.
\begin{figure}
    \centering
    \includegraphics[width=\linewidth]{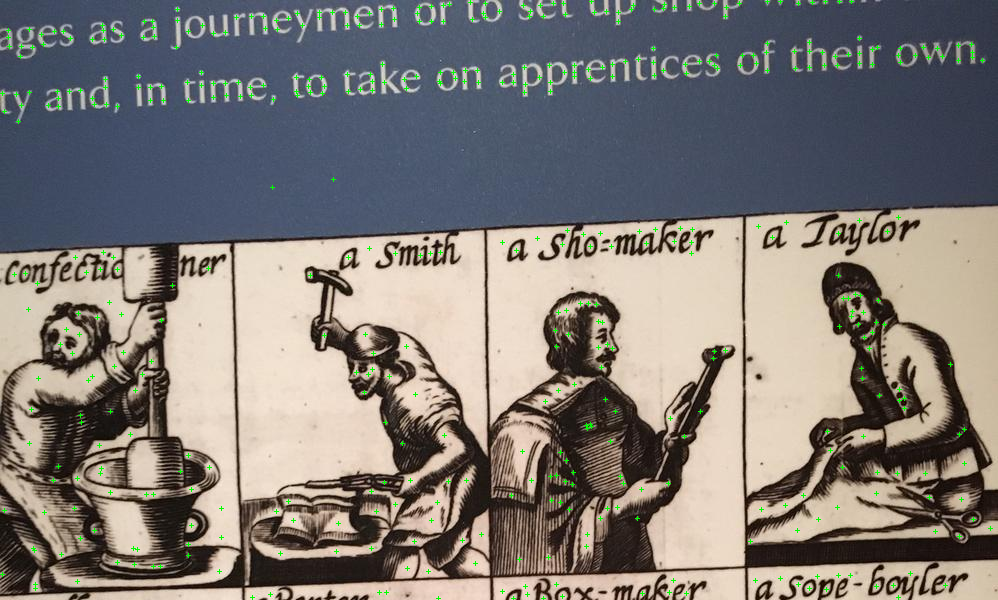}
    \includegraphics[width=\linewidth]{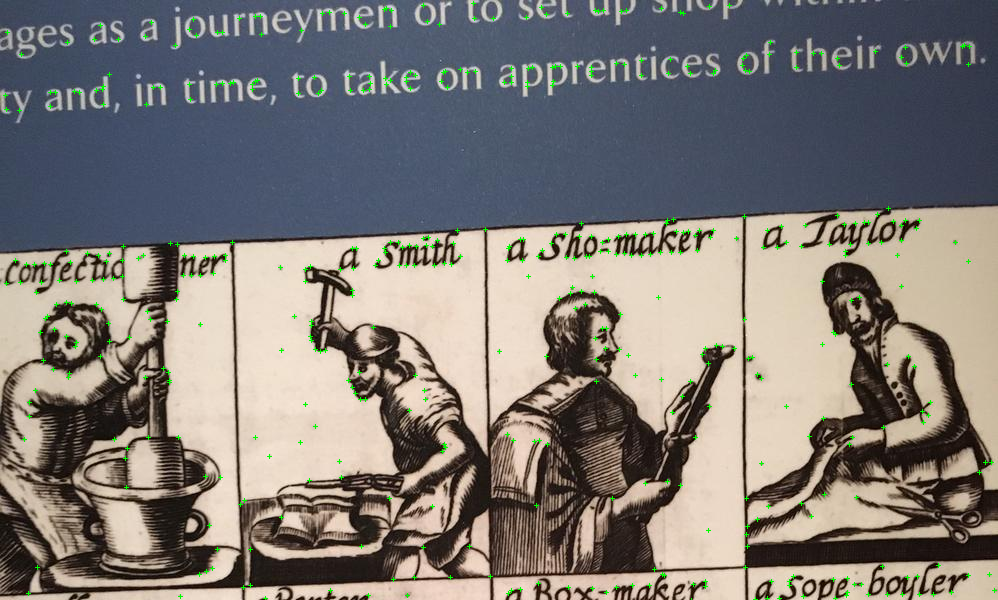}
    
    \caption{\textbf{Light vs dark keypoints.} Detectors can significantly increase the expected reward by choosing either \emph{dark} keypoints, where the pixel intensity is low, or \emph{light} keypoints, where the pixel intensity is high. It turns out that either of these choices produce approximately the same expected reward. However, we argue that this is an undesirable property, e.g., due to inversions that can occur naturally due to day-night changes, or that certain images may be dominated by either dark or light keypoints.}
    \label{fig:dark-light}
\end{figure}
We conjecture that this is an intrinsic property of the keypoint detection problem, whereby significant gains in repeatability can always be obtained by sacrificing diversity. 
We give a more extensive motivation for how this behavior may arise naturally in through analysis of a toy model of keypoint detection in~\Cref{suppl:toy-model}.

Perhaps even more surprisingly, we find that this not only occurs in our objective and model, but also in ALIKED~\citep{Zhao2023ALIKED}, which uses a different objective and architecture. 
In particular, empirically we find that it occurs when using rotation augmentation. 
We qualitatively demonstrate this finding for ALIKED in~\Cref{fig:upright-rot}.
\begin{figure}
    \centering
    \includegraphics[trim=0cm 26.5cm 50cm 0cm, clip,width=.95\linewidth]{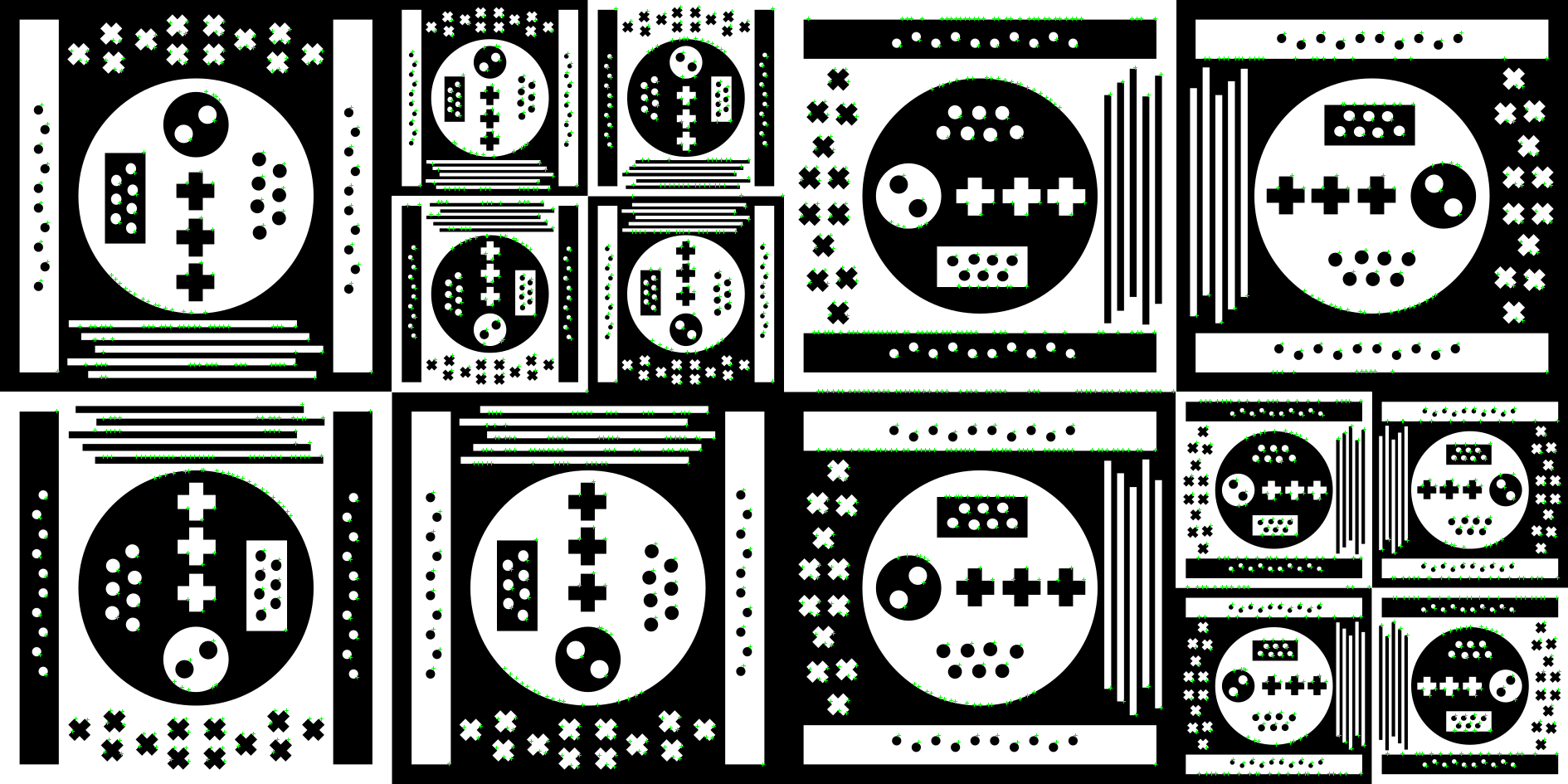}
    \includegraphics[trim=0cm 26.5cm 50cm 0cm, clip,width=.95\linewidth]{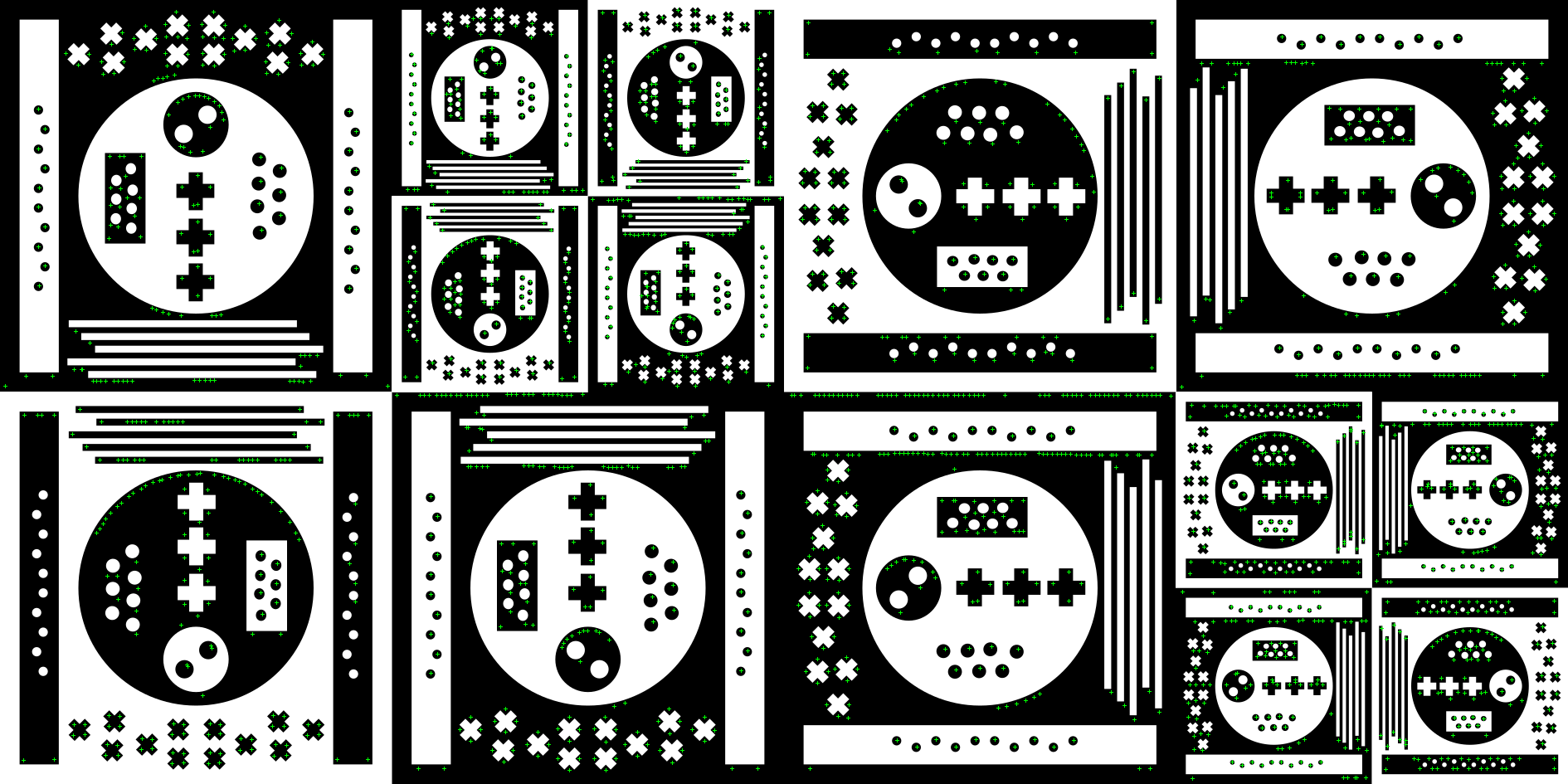}
    \caption{\textbf{Enforcing rotation invariance causes light/dark keypoint detectors also in ALIKED.}
    \textbf{Top:} Detections of ALIKED trained on upright images. \textbf{Bottom:} Detections of ALIKED trained with rotation augmentation. Remarkably, we observe that enforcing rotation invariance is what causes the emergence of light/dark keypoint detectors, and that this holds also for ALIKED, which uses a different objective and architecture than ours.}
    \label{fig:upright-rot}
\end{figure}
As can be seen from the figure, while the upright detector avoids the dark/light dilemma, it exhibits a strong upright bias. For example, it only detects keypoints on the lower right of white circles, which would only yield repeatable detections for upright images.
We were able to reproduce a similar behaviour in \ours~by removing the rotation augmentation.
In~\Cref{suppl:rekd} we also find REKD~\citep{lee2022self}, which uses a rotation equivariant architecture, only detects dark keypoints, showing that this behaviour extends beyond augmentation. 
We conducted experiments whether light/dark detection was correctable through augmentation (image negations), but found that it led to significant degradation in performance, which we describe in~\Cref{suppl:failed-aug}. 
In~\Cref{sec:distill} we instead propose a way to harness both detector types through distillation.

\paragraph{Bouba/Kiki detectors:}
We also observe, with naming inspired by the bouba/kiki phenomenon~\citep{ramachandran2001synaesthesia}, that detectors may converge to \emph{bouba}-type detectors, typically placing the keypoint maximum at the center of a blob, and \emph{kiki}-type detectors, which place keypoints only at corners. 
A qualitative example of these two types is presented in~\Cref{suppl:bouba-kiki}.
Empirically we found that the kiki-type detector performs better, and therefore, by visual inspection of detectors from different random seeds, select a dark detector exhibiting kiki-type behaviour. 

\subsection{Light+Dark Detector Distillation}
\label{sec:distill}
In order to train a detector capabale of detecting both light and dark types of keypoints, we train a third detector with the following distillation objective
\begin{equation}
    \mathcal{L}_{\text{distill}} = D_{\rm KL}(p_{r} \Vert p_{\theta}),
\end{equation}
where 
\begin{equation}
    p_{r}(\mathbf{x}) \propto M_r(p_{\theta_{\text{dark}}}(\mathbf{x}),p_{\theta_{\text{light}}}(\mathbf{x})),
\end{equation} 
and
\begin{equation}
    M_r(a,b) = (1/2 (a^r+b^r))^{1/r}
\end{equation}
is a Generalized mean function, where we consider $r\in\{1,2,\infty\}$.
Empirically we found that $r=\infty$, which corresponds to taking the point-wise maximum of the two distributions, performs the best.
An illustration for why using $r=\infty$ is reasonable for keypoints is given in~\Cref{fig:max}.
We additionally prove that the local maxima are preserved if $r=\infty$ under some mild assumptions in~\Cref{suppl:max}.
\begin{figure}
    \centering
    \includegraphics[width=\linewidth]{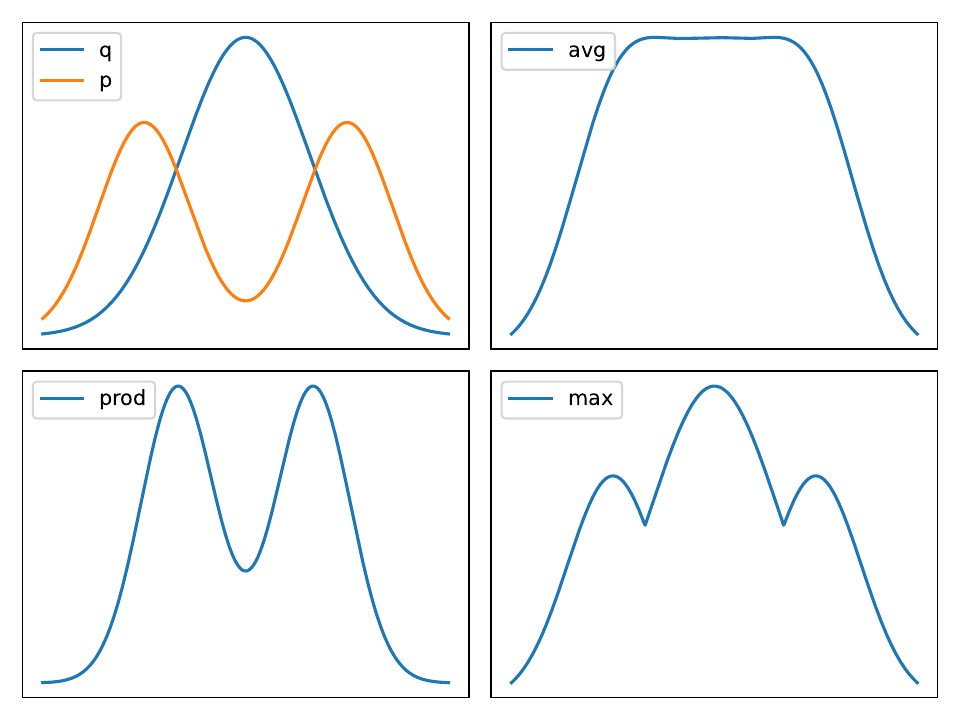}
    \caption{\textbf{Why max is good.} Given two detectors $p$ and $q$, we would like their ensemble to retain the original keypoints. While averaging or multiplying the distributions typically change the shape and locations of the keypoints, the max operation preserves the peaks significantly better.}
    \label{fig:max}
\end{figure}

One might wonder if further gains could be made by using the RL objective with pretrained weights from the distilled model.
The answer is, surprisingly, no.
We empirically found that even models distilled to detect both light and dark keypoints degenerate to only detecting one or the other when finetuned with the RL objective.
This suggests that the light/dark emergence is a fundamental property of the keypoint detection problem.

\section{Experiments}
\label{sec:results}
\subsection{Network Architecture}
Inspired by the architecture of DeDoDe~\citep{edstedt2024dedode}, we use the \emph{DeDoDe-S} model, which uses a VGG11~\citep{simonyan2015vgg} backbone as feature encoder, and depthwise seperable convolution blocks as decoder. 
We provide full details for the architecture used in \Cref{suppl:arch}.
We found that while a larger architecture gives slightly higher performance, the extra gains were not sufficient to motivate the significantly increased runtime. Runtime comparisons between \ours~and other SotA models are presented in~\Cref{suppl:runtime}.
\subsection{Training Details}
We follow DeDoDe v2~\citep{edstedt2024dedodev2} and train on MegaDepth, using random $\{\ang{0},\ang{90}, \ang{180}, \ang{270}\}$ rotations, which is done to ensure equivariance in the case of non-upright images~\citep{bökman2024steerers, bokman2024affine}. We use a training resolution of 640. We use the AdamW optimizer with a learning rate of $2 \cdot 10^{-4}$ for the decoder and $1\cdot10^{-5}  $ for the encoder. 
We train the dark detector for 600k pairs, the light detector for 800k pairs, and the distillation of the two for another 800k pairs.
The training time for each detector was approximately 10 hours on an A100 GPU.

\subsection{Inference Settings}
\label{sec:inference-settings}
We follow the settings of LightGlue and resize the longer size of the image to 1024. We use an NMS size of $3$. We use a similar sampling strategy as during training, however we do not use the KDE balancing of the keypoints.%
We additionally use subpixel sampling inspired by ALIKED~\citep{Zhao2023ALIKED} where around each keypoint we compute a local softmax distribution (with the same size as the NMS) with temperature $\tau = 0.5$ and compute the adjusted keypoint as the expectation over the patch distribution.
Inference settings for other detectors and detailed evaluation protocol are presented  in~\Cref{suppl:inference,suppl:eval-protocol} respectively.
\begin{table}
    \centering
    \caption{\textbf{Ablation study.} Performance is measured as the average AUC@5$^{\circ}$ of the pose error over Essential and Fundamental matrix estimation on MegaDepthIMCPT (higher is better).}
    \begin{tabular}{l ccccc}
    \toprule
        \multirow{2}{*}{\underline{Method} $\downarrow$} & \multicolumn{5}{c}{\underline{Max \textnumero~Keypoints}} \\
        & 512 & 1024 & 2048 & 4096 & 8192 \\
        \midrule
         Light & 50.2 & 53.1 & 55.1 & 55.8 & 55.9 \\
         Dark & 50.0 & 52.9 & 55.0 & 55.3 & 55.7 \\
         \midrule
         $r=1$ & 50.7 & 53.7 & 55.8 & 56.3& 56.0 \\
         $r=2$ & 51.0 & 54.1 & 55.6 & 56.3 & 56.3 \\
         $r=\infty$ (\textbf{\ours}) & 51.0 & 53.8 & 55.8 & 56.5& 56.5\\
         \bottomrule
    \end{tabular}
    \label{tab:megaIMCPT}
\end{table}

\paragraph{Match Estimation:}
Evaluating the performance of a detector requires a matcher $m_{\theta}$. 
While for certain benchmarks dense ground truth correspondences exist, in general these are either too sparse (as in MegaDepth), or inaccurate (as in ScanNet). 
We thus want to use a matcher independent of the dataset.
To this end we leverage the dense image matcher RoMa~\citep{edstedt2024roma} to match the keypoints.
We use the following heuristic. 
For each keypoint in $\mathbf{I}^A$ we sample the dense RoMa warp, to get a corresponding point in $\mathbf{I}^B$. 
We then compute the distance between the warped points and the detections in $\mathbf{I}^B$. 
We define matches as mutual nearest neighbours within a distance of $0.25\%$ of the image size of each other. 
For RoMa we use a coarse resolution of $560\times 560$ and a $864\times 864$ upsample resolution, which are the defaults. 
We do not use symmetric matching, i.e., incorporating the backwards warp of RoMa, for simplicity.

\subsection{Ablation Study}
We investigate the performance of variations of our model on a validation set of eight Megadepth scenes corresponding to \emph{Piazza San Marco} (0008), Sagrada Familia (0019), Lincoln Memorial Statue (0021), British Museum (0024), Tower of London (0025), Florence Cathedral (0032), Milan Cathedral (0063), Mount Rushmore (1589).
We report the pose AUC as error metric. Results averaged over Essential and Fundamental matrix estimation are presented in~\Cref{tab:megaIMCPT}. We find that both the light and dark detector performs similarly, while our distilled model significantly outperform both, validating our approach.
We additionally find that using $r=\infty$, i.e., using pointwise maximum, generally outperforms $r=1$, i.e., averaging, by about $0.3$ points.

\subsection{State-of-the-Art Comparisons}
We compare \ours~to previous SotA detectors on standard two-view geometry estimation tasks (Essential matrix, Fundamental matrix, Homography). 
We compare against the supervised detectors SuperPoint~\citep{detone2018superpoint} and ReinforcedFP~\citep{bhowmik2020reinforced}.
We additionally compare to four learned self-supervised detectors, ALIKED~\citep{Zhao2023ALIKED} ($\circlearrowleft$ indicates rotational augmentation), REKD~\citep{lee2022self}, DISK~\citep{tyszkiewicz2020disk}, and DeDoDe v2~\citep{edstedt2024dedodev2}. 
For reference we additionally include SIFT~\citep{lowe2004distinctive}.

\paragraph{MegaDepth1500:}
MegaDepth1500 is a standard benchmark for image matching introduced in LoFTR~\citep{sun2021loftr}. It consists of 1500 image pairs with a uniform distribution of overlaps between 0.1 and 0.7 from the exterior of Saint Peter's Basilica (scene number 0015 in MegaDepth), and the front of Brandenburger Tor (scene number 0022 in MegaDepth). 
We evaluate both Essential matrix and Fundamental matrix estimation. 
We report the pose AUC@\ang{5} as error metric. Results for Essential matrix estimation are presented in \Cref{tab:mega1500} and Fundamental matrix estimation in~\Cref{tab:mega1500_F}. \ours~clearly outperforms all previous methods, including the supervised SuperPoint, with a wide margin for all number of keypoints.
\begin{table}[]
    \centering
    \caption{\textbf{Essential Matrix Estimation on MegaDepth1500.} Performance is measured as the AUC@5$^{\circ}$ of the pose error (higher is better). Upper portion contains supervised detectors.}
    \begin{tabular}{l ccccc}
    \toprule
        \multirow{2}{*}{\underline{Method} $\downarrow$} & \multicolumn{5}{c}{\underline{Max \textnumero~Keypoints}} \\
        & 512 & 1024 & 2048 & 4096 & 8192 \\
        \midrule
         SuperPoint & 57.5 & 63.3 & 65.6 & 65.7 & 65.0 \\
         ReinforcedFP & 58.7&63.0&64.7&65.2&64.2\\
         \midrule
         SIFT & 48.9 & 58.8 & 64.3 & 65.9 & 65.6 \\
         ALIKED & 63.8 & 66.0 & 66.9 & 67.5 & 67.8 \\
        ALIKED$^\circlearrowleft$ & 63.1 & 65.7& 67.3& 68.3& \first{68.8}\\
        REKD & 59.8	&65.7	&66.7	&66.7	&66.6 \\
         DISK & 55.4 & 59.5 & 62.2 & 65.0 & 65.9 \\
         DeDoDe v2 & 57.5 & 63.8 & 66.6 & 68.1 & 68.0 \\
         \textbf{\ours} (Ours) & \first{64.9} & \first{67.9} & \first{69.4} & \first{69.3} & \first{68.8} \\
         \bottomrule
    \end{tabular}
    \label{tab:mega1500}
\end{table}

\begin{table}[]
    \centering
    \caption{\textbf{Fundamental Matrix Estimation on MegaDepth1500.} Performance is measured as the AUC@5$^{\circ}$ of the pose error (higher is better). Upper portion contains supervised detectors.}
    \begin{tabular}{l ccccc}
    \toprule
        \multirow{2}{*}{\underline{Method} $\downarrow$} & \multicolumn{5}{c}{\underline{Max \textnumero~Keypoints}} \\
        & 512 & 1024 & 2048 & 4096 & 8192 \\
        \midrule
        SuperPoint & 41.7 & 48.3& 51.4 & 50.6& 49.5 \\
        ReinforcedFP & 42.7 & 46.9& 48.5& 49.2& 47.8 \\
        \midrule
        SIFT & 36.7& 45.7& 50.7 & 52.5& 52.4\\
        ALIKED & 49.7 & 53.2& 54.7& 55.6& 56.2 \\
        ALIKED$^\circlearrowleft$ & 48.6 & 52.3& 54.1 & 56.4& \first{57.6} \\
        REKD & 44.1	&50.8	&53.3	&53.3	&53.3\\
        DISK & 38.8 & 45.0& 49.0& 52.1& 53.6 \\
        DeDoDe v2 & 40.8 & 48.3& 53.1& 54.5& 54.6 \\
        \textbf{\ours} (Ours) & \first{50.6} & \first{55.5} & \first{57.4} & \first{57.8} & 56.4\\
         \bottomrule
    \end{tabular}
    \label{tab:mega1500_F}
\end{table}

\paragraph{ScanNet1500:}
ScanNet1500 was introduced in SuperGlue~\citep{sarlin20superglue} and consists of 1500 pairs between $[0.4, 0.8]$ overlap (computed via GT depth) from the ScanNet~\citep{dai2017scannet} dataset. As in MegaDepth1500 we report the pose AUC as our main metric. Results are presented in \Cref{tab:scannet1500,tab:scannet1500_F}. \ours~sets a new state-of-the-art. In particular, we find that our model significantly outperforms previous self-supervised detectors for Fundamental matrix estimation.
\begin{table}[]
    \centering
    \caption{\textbf{Essential Matrix Estimation on ScanNet1500.} Performance is measured as the AUC@5$^{\circ}$ of the pose error (higher is better). Upper portion contains supervised detectors.}
    \begin{tabular}{l ccccc}
    \toprule
        \multirow{2}{*}{\underline{Method} $\downarrow$} & \multicolumn{5}{c}{\underline{Max \textnumero~Keypoints}} \\
        & 512 & 1024 & 2048 & 4096 & 8192 \\
        \midrule
        SuperPoint & 24.7 & 26.4 & 28.0 & 28.8 & 28.3\\
         ReinforcedFP & 25.2 & 27.3 & 28.8 & 28.4 & 28.9 \\
         \midrule
         SIFT & 17.9 & 23.5 & 26.2 & 27.5 & 27.8 \\
         ALIKED & 22.3 & 24.1 & 25.9 & 28.0 & 28.3 \\
        ALIKED$^\circlearrowleft$ & 21.8 & 24.5 & 26.9 & 27.4 & 27.8 \\
        REKD & 22.0&	25.1&	26.2&	26.3&	26.3\\
         DISK & 15.2 & 20.0 & 23.5 & 25.9 & 27.2 \\
         DeDoDe v2 & 19.9 & 25.2 & 27.4 & 28.6 & 29.2 \\
         \textbf{\ours} (Ours) & \first{25.7} & \first{27.9} & \first{28.3} & \first{28.9} & \first{29.3} \\
         \bottomrule
    \end{tabular}
    \label{tab:scannet1500}
\end{table}

\paragraph{HPatches:}
HPatches~\citep{balntas2017hpatches} is a Homography benchmark of a total of 116 sequences, of which 57 consist of variation in illumination, and 59 sequences consist of viewpoint changes. The task is to estimate the dominant planar Homography from the keypoint correspondences (identity in the case of illumination). We consider mainly viewpoint changes, results of which are presented in~\Cref{tab:hpatches}. We find that \ours~sets a new state-of-the-art, particularly in the few-keypoint setting.

\begin{table}[]
    \centering
    \caption{\textbf{Fundamental Matrix Estimation on ScanNet1500.} Performance is measured as the AUC@5$^{\circ}$ of the pose error (higher is better). Upper portion contains supervised detectors.}
    \begin{tabular}{l ccccc}
    \toprule
        \multirow{2}{*}{\underline{Method} $\downarrow$} & \multicolumn{5}{c}{\underline{Max \textnumero~Keypoints}} \\
        & 512 & 1024 & 2048 & 4096 & 8192 \\
        \midrule
        SuperPoint & 15.7 & 18.1 & 20.2 & 22.6 & 23.2 \\
        ReinforcedFP & 17.0 & 19.1 & 20.2 & 22.6 & 23.2 \\
        \midrule
        SIFT & 10.8 & 14.9 & 18.7 & 19.6 & 19.6 \\
        ALIKED & 14.2 & 16.5 & 18.4 & 21.6& 22.9 \\
        ALIKED$^\circlearrowleft$ & 14.2 & 17.5 & 19.7 & 21.1& 21.6 \\
        REKD & 13.3&	16.4&	18.5&	18.5&	18.5\\
        DISK & 6.9 & 11.0 & 15.6 & 19.1 & 21.0\\
        DeDoDe v2 & 10.3 & 16.0& 19.7 & 21.9 & 22.9\\
         \textbf{\ours} (Ours) & \first{18.3} & \first{20.6} & \first{21.7} & \first{22.2} & \first{23.0} \\
         \bottomrule
    \end{tabular}
    \label{tab:scannet1500_F}
\end{table}

\begin{table}[]
    \centering
    \caption{\textbf{Homography Estimation on HPatches.} Performance is measured as the AUC@3 pixels of the end-point error (higher is better). Upper portion contains supervised detectors, and lower self-supervised detectors.}
    \begin{tabular}{l ccccc}
    \toprule
        \multirow{2}{*}{\underline{Method} $\downarrow$} & \multicolumn{5}{c}{\underline{Max \textnumero~Keypoints}} \\
        & 512 & 1024 & 2048 & 4096 & 8192 \\
        \midrule
         SuperPoint & 55.0 & 57.5 & 59.3 & 60.5 & 61.6 \\
        ReinforcedFP & 55.8 & 58.5 & 60.9 & 62.8 & 62.9 \\
         \midrule
         SIFT & 51.7 & 53.4 & 55.9 & 58.1 & 58.8 \\
         ALIKED & 52.6 & 54.8 & 56.4 & 57.9 & 59.1 \\
         ALIKED$^\circlearrowleft$ & 48.2 & 52.4 & 54.0 & 56.3 & 57.9 \\
         REKD & 49.6 & 54.8 & 56.3& 56.2& 56.2\\
         DISK & 44.8 & 49.9 & 54.0 & 55.9 & 58.0 \\
         DeDoDe v2 & 52.2 & 56.9 & 60.0 & \first{61.8} & \first{62.9} \\
         \textbf{\ours} (Ours) & \first{58.2} & \first{60.5} & \first{61.0} & 61.5 & 61.4 \\
         \bottomrule
    \end{tabular}
    \label{tab:hpatches}
\end{table}

\section{Conclusion}
\label{sec:conclusion}
 
We presented \ours, a descriptor-free keypoint detector trained using reinforcement learning. We tackled how to formulate the keypoint detection problem in a policy gradient framework, how to sample diverse keypoints for training, regularization, and investigated two qualitatively different types of detectors that emerge from RL and how these can be optimally merged.
Finally, we conducted an in-depth set of experiments comparing a range of recent and traditional keypoint detectors on descriptor-free two-view pose estimation, which show that \ours~perform competitively or sets a new SotA on all benchmarks. 
In particular, \ours~excels in the few-keypoint setting, where previous self-supervised detectors often struggle. 

\paragraph{Limitations \& Future Work:}
\begin{enumerate}[label=\textbf{\alph*)}]
    \item While we identify several types of emergent detectors, which we combine by distillation, our proposed distillation objective does not directly optimize keypoint repeatability. Finding a way to optimize the repeatability objective while ensuring diversity is an interesting future direction.
    \item Further, while out-of-scope for the present paper, designing neural network architectures that are invariant under the observed keypoint variations (\eg, light-dark) is a potential avenue to enable the detector to find all types of keypoints, without distillation.
\end{enumerate}
\newpage

\section*{Acknowledgements}
This work was supported by the Wallenberg Artificial
Intelligence, Autonomous Systems and Software Program
(WASP), funded by the Knut and Alice Wallenberg Foundation and by the strategic research environment ELLIIT, funded by the Swedish government. The computational resources were provided by the
National Academic Infrastructure for Supercomputing in
Sweden (NAISS) at C3SE, partially funded by the Swedish Research
Council through grant agreement no.~2022-06725, and by
the Berzelius resource, provided by the Knut and Alice Wallenberg Foundation at the National Supercomputer Centre.
{
    \small
    \bibliographystyle{ieeenat_fullname}
    \bibliography{main}
}

\clearpage
\setcounter{page}{1}
\appendix

\maketitlesupplementary

Here we provide additional details and derivations that did not fit the main text.
\section{Qualitative Examples of Previous Self-Supervised Rotation Invariant Detectors}
\label{suppl:qualitative-comparison}
In contrast to the qualitative example presented in the main text of \ours~detections (see~\Cref{fig:qualitative}), we here present qualitative examples of two previous self-supervised rotation invariant detectors in~\Cref{fig:qual-aliked,fig:qual-rekd}. As can be observed in the figures, there is not a single keypoint placed on light pixels. 

\section{Off-Policy Reinforcement Learning for Keypoints}
\label{suppl:off-policy}
Policy gradient is per-definition on-policy as it is the expectation is in over trajectories samples drawn from the policy.
However, keypoint detection introduces the constraint of always sampling a fixed number of unique trajectories, which poses challenges to the on-policy paradigm.

DISK~\citep{tyszkiewicz2020disk}, which is still on-policy during training, enforces the constraint implicitly by first sampling one proposal keypoint per patch, and then accepting/rejecting this keypoint based on an additional distribution.
As only a keypoint can only be drawn once, this ensures the learning does not collapse.
While this makes the learning on-policy, it has issues with, e.g., artifacts around patch boundaries~\citep{santellani2023strek}.

In this paper we avoid using patch-based sampling, and instead emulate the procedure used at inference time as it is better aligned with our goals, but makes our approach off-policy. Our sampling has similarities to the sampling used in S-TREK~\citep{santellani2023strek}, that sequentially samples in such a way that no sample is within a radius $r$ of each other, which is reminiscent of Poission-disk sampling~\citep{cook1986stochastic, bridson2007fast}, however, their procedure will still face issues when many points have similar probability, which is not the case with top-k sampling.

\section{Pose Reward}
\label{suppl:pose-reward}
The end goal of keypoint detection is improving the final reconstruction quality, which is both non-differentiable, and would be intractably time-consuming.
However, we may set up a reward for smaller SfM-like problems, and set a reward for reconstructions that produce a smaller reconstruction error. 

To emulate the test-time setup, we draw several subsets of $\mathcal{K}_i$, $K_i=10$ keypoints, and $i \in \mathbb{S} = \{1,\dots,|\mathcal{K}|/|\mathcal{K}_i|\}$.
\begin{equation}
    \epsilon_{\mathbf{R}}(\hat{\mathcal{K}}) = \epsilon_{\mathbf{R}}(\text{RANSAC}(\hat{\mathcal{K}})),
\end{equation}
where we run PoseLib for $10$ iterations, and include refinement. We use only the rotational error, as the translational error is poorly conditioned for small baselines due to the scale ambiguity.

We choose not to base the reward directly on the pose-error, as the poses and cameras of MegaDepth have modeling errors~\citep{brissman2023camera}, and are thus prone to unreliable estimates of reconstruction quality. However, we might expect that better solutions have lower pose error in general. 
\begin{equation}
    r_{\text{pose}}(\mathcal{K}_i) = \begin{cases}
        1 - \frac{\epsilon_{\mathbf{R}}(\mathcal{K}_i)}{\epsilon_{\text{max}}} , \epsilon_{\mathbf{R}}(\mathcal{K}_i) \le \epsilon_{\text{max}} \forall j \in \mathbb{S},\\
        0, \text{else,}
    \end{cases}.
\end{equation}
where we set $\epsilon_{\text{max}}=10$ in practice.

However, as discussed in~\Cref{sec:reward}, we experimentally found that this reward did not improve the results significantly, and thus do not retain it in our final approach. 
While similar approaches has previously been shown to work by~\citet{bhowmik2020reinforced}, they initialize their model from a pretrained SuperPoint model. 
It is possible that such finetuning could also be applied to our detector as well, but it is beyond the scope of the current paper.

\section{A Toy Model for How Dark/Bright Detectors Emerge}
\label{suppl:toy-model}
Consider $H\times W$ images with two types of keypoints, white pixels and black pixels.
In each image there are 10 white pixels, and 10 black pixels.
For completeness, we can consider the rest of the pixels as gray, and that $H\cdot W \gg 10$. 
An example image is shown in~\Cref{fig:toy-model}.

The locations of the keypoints are completely random, and are indistinguishable from each other. 
However, they have a corresponding keypoint in another image (with a different random spatial distribution of the keypoints). If a pair of corresponding keypoints are selected, the model gets a reward of 1. The total reward is the total number of such selected keypoint pairs.
\begin{figure}
    \centering
    \includegraphics[width=\linewidth]{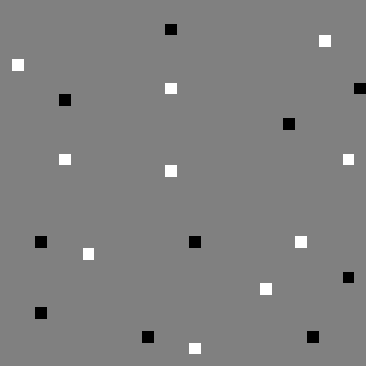}
    \caption{\textbf{Toy model of keypoint detection.} Under a constraint of $K=10$ keypoints and a random spatial distribution of the keypoints, the only consistent way to choose the keypoints is to either only choose white, or only choose black.}
    \label{fig:toy-model}
\end{figure}

We now set a constraint, that a maximum of 10 keypoints can be selected from each image. 
Three plausible strategies are,
\begin{enumerate}
    \item Select 5 white and 5 black keypoints randomly.
    \item Select only white keypoints.
    \item Select only black keypoints.
\end{enumerate}
If we compute the expected reward for these strategies, we get $\{5,10,10\}$. Clearly then, the optimal solution will involve choosing either only bright or only dark keypoints.

We conjecture that this reasoning extends beyond the toy case, leading to detectors very quickly choosing either bright or dark keypoints. While this might seem unproblematic (after all, if it is more consistent, why not choose), we find that in practice certain images may consist of only one type of keypoints, or that the spatial distribution is better when considering both types.

While one could in principle impose additional rewards for the distribution of keypoints, e.g. by KDE weighting, and regularization as we do, we found that the amount of regularization needed made the detectors significantly worse.

\section{REKD is a Dark Detector}
\label{suppl:rekd}
In~\Cref{fig:rekd-dets} we qualitatively demonstrate that REKD is a dark detector.
This is interesting as REKD does not use rotational augmentation, but rather use a equivariant architecture. 
This indicates that dark/light emergence is independent on if rotation invariance is enforced through augmentation or through architecture.

\begin{figure*}
    \centering
    \includegraphics[width=\linewidth]{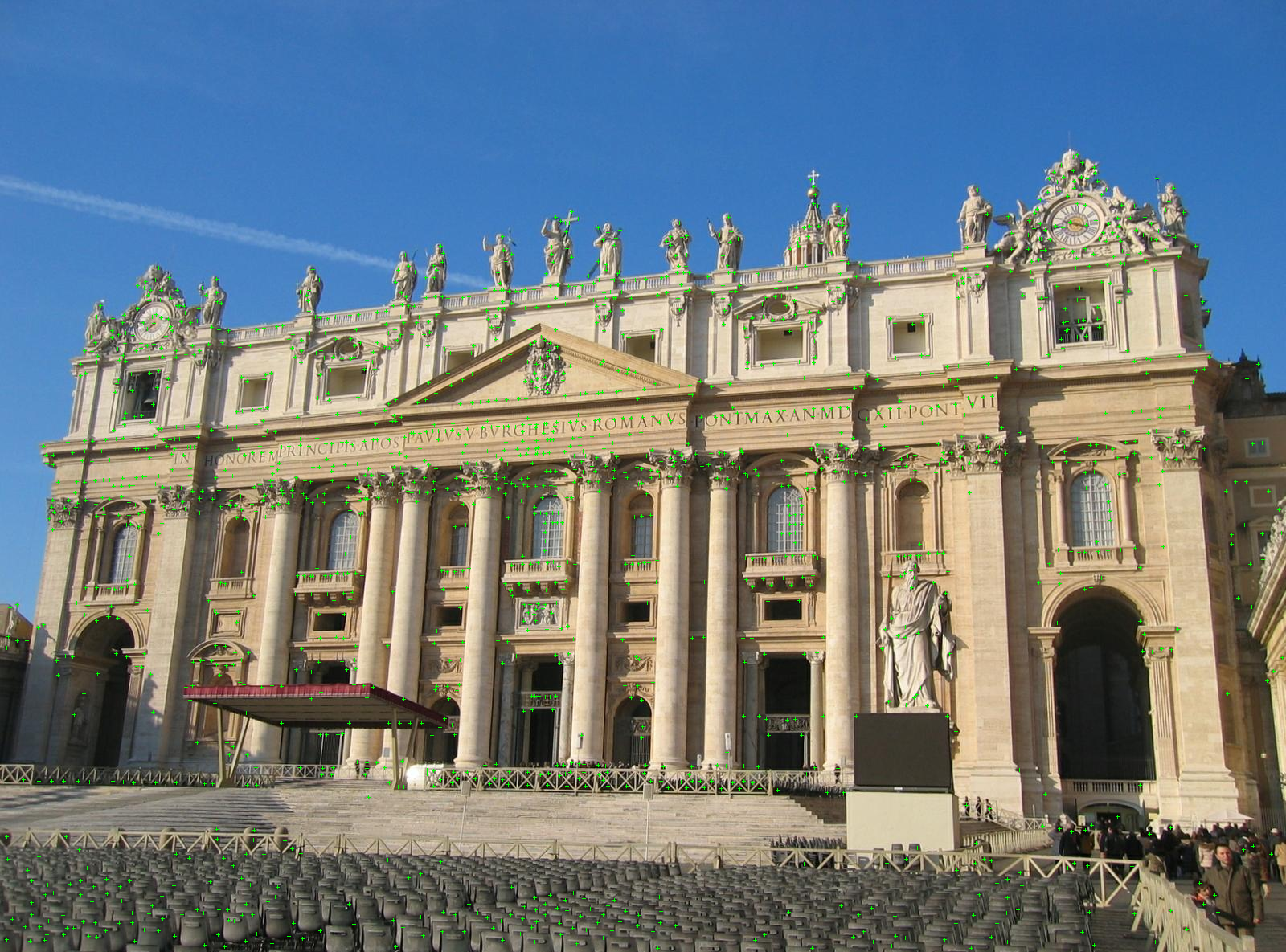}
    \caption{\textbf{Qualitative example of ALIKED (trained with rotational augmentations) keypoint detections.} In contrast to \ours, while there are keypoints near the cross, there are no keypoints on the cross. If one observes the figure closely, it can be observed that there in fact no keypoint on light colored pixels.}
    \label{fig:qual-aliked}
\end{figure*}

\begin{figure*}
    \centering
    \includegraphics[width=\linewidth]{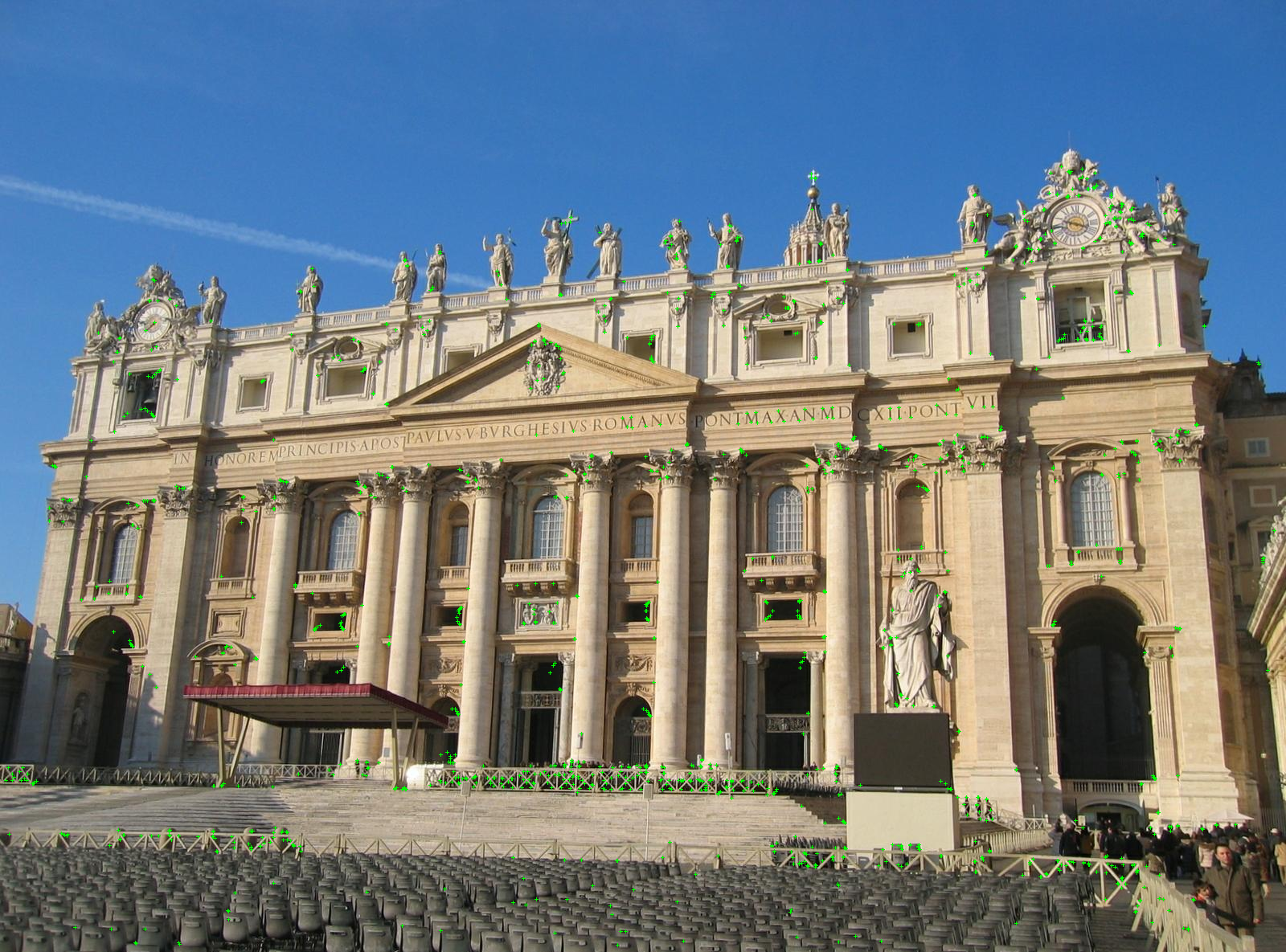}
    \caption{\textbf{Qualitative example of REKD (which uses a rotation equivariant architecture) keypoint detections.} In contrast to \ours, while there are keypoints near the cross, there are no keypoints on the cross. If one observes the figure closely, it can be observed that there in fact no keypoint on light colored pixels.}
    \label{fig:qual-rekd}
\end{figure*}

\begin{figure*}
    \centering
    \includegraphics[width=\linewidth]{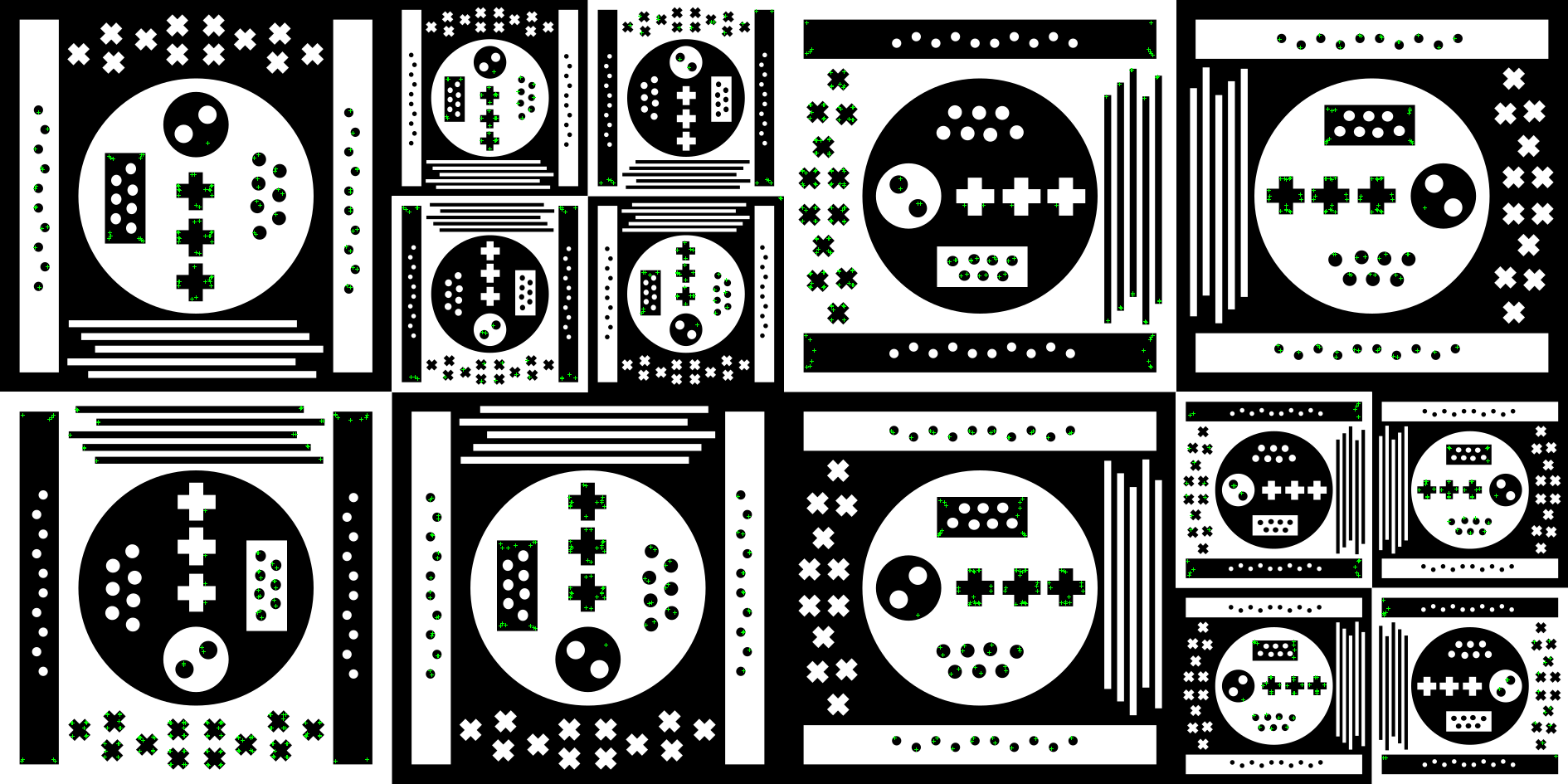}
    \caption{\textbf{Detections of the rotation equivariant detector REKD~\citep{lee2022self}.} Again, we observe the same bias toward detecting only dark keypoints as described in~\Cref{sec:emerge}.}
    \label{fig:rekd-dets}
\end{figure*}

\section{Light Dark Equivariance Through Augmentation?}
\label{suppl:failed-aug}
As described in the main text, and in the previous section. Light and dark keypoint detectors tend to emerge from training.
We attempted to remedy this by augmentation.
To this end we always negated one of either $\mathbf{I}^A$ or $\mathbf{I}^B$.
By negation we mean both the operation $\mathbf{I}_{\text{neg}} = 1 - \mathbf{I}$ in the original RGB colorspace, and luminance negation which is done by converting the image to the HSL colorspace, inverting the luminance channel, and then converting back to RGB. 
Note that in the case of grayscale images these operations are equivalent.

While we found that this leads to detectors that detect both light and dark keypoints, it tends to begin by learning \emph{line-like} detections, as those are invariant to the brightness.
We find that during learning the model learns to discard most of these (retaining junctions), but has difficulty finding keypoints not on any line, and thus typically performs worse than the light or dark counterparts.

An alternative approach to augmentation would be to explicitly design the network to be equivariant to the action of the image negation group. 
Color equivariance has previously been explored~\citep{lengyel2024color,o'mahony2025learning}, but to the best of our knowledge not for image negations. 
In the simplest case, one could include both the image and its' negation, process them independently, and merge the predictions with the max operation, as done in the main paper. 
This has the downside of requiring twice the compute, and it is furthermore not obvious that detectors trained in this way would perform well, as negated images are qualitatively different from regular images.
Nevertheless, we believe that an equivariant architecture would likely work well, as it has previously been demonstrated to work for rotations~\citep{lee2022self,santellani2023strek}.

\section{Kiki vs Bouba Detectors}
\label{suppl:bouba-kiki}
In~\Cref{fig:bouba-kiki} we qualitatively illustrate the difference between bouba and kiki detectors.
Kiki detectors only detect keypoints near edges, while bouba detectors tend to prefer centers of mass of blobs. 
For example, for the crosses in~\Cref{fig:bouba-kiki}, the bouba light detector detects the center point, while the kiki dark detector detects all corners.
While both bouba and kiki detectors are common for dark detectors, we did not commonly observe kiki light detectors. 
Hence our usage of a bouba light detector for \ours.
\begin{figure*}
    \centering
    \includegraphics[trim=0cm 0cm 50cm 0cm, clip,width=.495\linewidth]{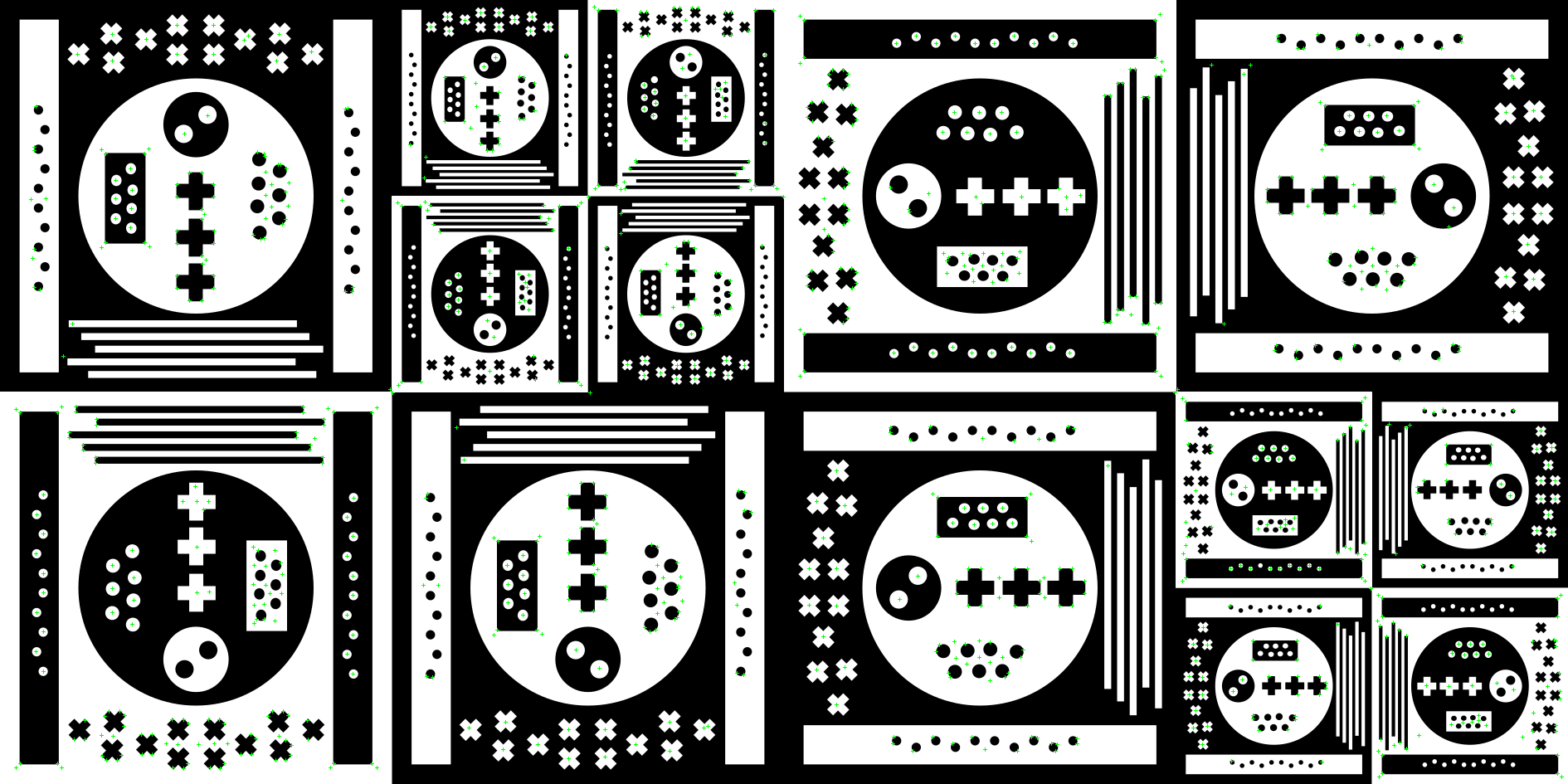}
    \includegraphics[trim=0cm 0cm 27.09cm 0cm, clip,width=.495\linewidth]{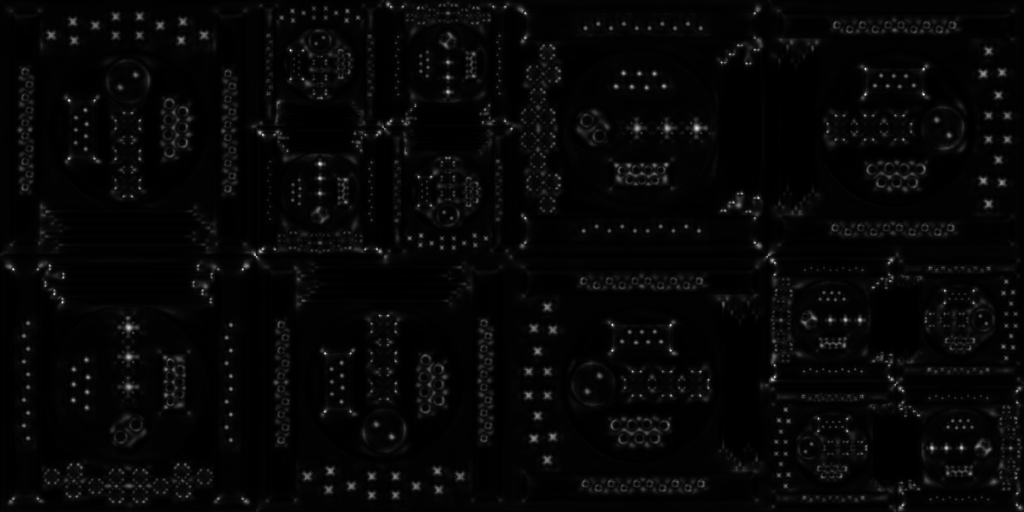}
    \caption{\textbf{Bouba light detector --- Kiki dark detector.} \textbf{Left:} Keypoints from~\ours, which is a combination of a bouba light detector and a kiki dark detector. Note in particular that keypoints on the white crosses are centered, while the keypoints on the dark crosses are on the edges. \textbf{Right:} The corresponding scoremaps. Here we also see that the scoremap of bouba detectors is in general more soft than kiki detectors.}
    \label{fig:bouba-kiki}
\end{figure*}

\section{Differentiable RANSAC vs Policy Gradient vs Reinforcement Learning}
\label{suppl:diff-ransac}
The policy gradient formulation used in DISK~\citep{tyszkiewicz2020disk} and \ours~has similarities with differentiable approximations used for RANSAC. 
We thus recap that related work here.

DSAC~\citep{brachmann2019neural} and NG-RANSAC~\citep{brachmann2017dsac} estimate gradients of the pose error by a variance reduced~\citep{sutton2018reinforcement} REINFORCE~\citep{williams1992simple} estimator, which reads
\begin{equation}
    \nabla_{\theta} \mathcal{L} \approx \sum_{i=1}^H (\epsilon_{\text{pose}}(\mathcal{H}(\mathcal{K}_i)-\mathbb{E}[\epsilon_{\text{pose}}])\nabla_{\theta}\log p_{\theta}(\mathcal{K}_i),
\end{equation}
where
\begin{equation}
    \mathcal{L} = \mathbb{E}[\epsilon_{\text{pose}}(\mathcal{K})].
\end{equation}

\citet{wei2023generalized} use a straight-through Gumbel-softmax estimator~\citep{jang2017categorical} to estimate the gradient of the keypoint positions with respect to keypoint scores. 
Relatedly, Key.Net~\cite{barroso2019key} and ALIKED~\citep{Zhao2023ALIKED} uses a local softmax operator to sample keypoints, in order to make their repeatability objective differentiable w.r.t the scoremap.
In this work we focus on the scoremap perspective, without directly differentiating the keypoint positions.

These approach is related to ours and DISK~\citep{tyszkiewicz2020disk} through the usage of the identity $\nabla_{\theta} \log p(\theta) = \frac{\nabla_{\theta}p(\theta)}{p(\theta)}$, which motivates REINFORCE and policy gradient.

We do believe that differentiable RANSAC has potential for self-supervised learning of keypoints. 
However, we found that the gradients, at least in our implementations, were in general not reliable as a supervision signal.

\section{Pointwise Maximum of Two Distributions}
\label{suppl:max}
For our distillation target we use the point-wise maximum of the light and dark keypoint detector.
It turns out that this way of merging distributions has some nice properties.
In particular, here we show that, under some mild assumptions, this merged distributions' local maxima is the union of the local maxima of the two distributions.

We'll first define what we mean by a keypoint.
\begin{definition}[Keypoint]
We define a \emph{keypoint} $f \ge 0$ as a $\mathcal{C}^2$, unimodal function with a compact support around its mode $\mu_f := \text{argmax}_x f(x)$.
\end{definition}

When we ensemble two detectors, it may be the case that keypoints ``disappear''. We will call such keypoints \emph{subsumed}. 
\begin{definition}[Subsumed keypoint]
We say that a keypoint $f\ge0$ is \emph{subsumed} by another keypoint $g\ge0$ if there is no neighbourhood $\mathcal{X}, \mu_f\in\mathcal{X}$ such that $f(\mu_f) \ge g(\mu_f)$. This is visually illustrated in~\Cref{fig:subsumed}.
\end{definition}
\begin{figure}
    \centering
    \includegraphics[width=\linewidth]{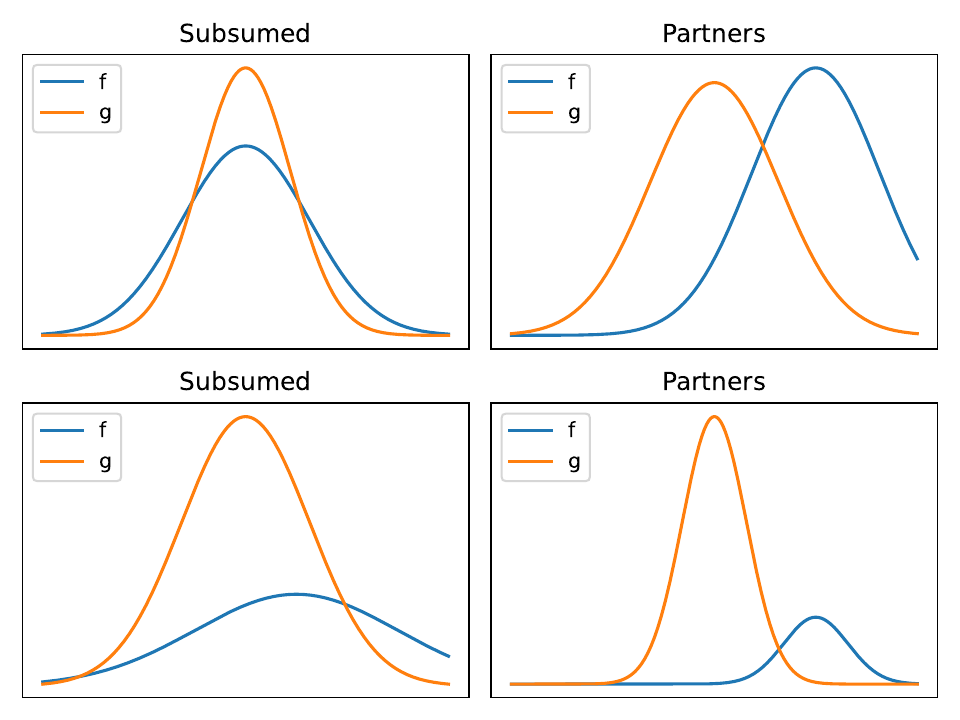}
    \caption{\textbf{Illustration of subsumed and partner keypoints.} Taking the maximum of partner keypoints preserve the local maxima.}
    \label{fig:subsumed}
\end{figure}
In contrast, when neither of two keypoints subsume the other, we'll call them \emph{partners}.
\begin{definition}[Partner keypoints]
\label{def:partner-kps}
    Let $f,g$ be keypoints such that neither of them subsumes the other. We call such a pair \emph{partners}.
\end{definition}
Next we expand this to cover distributions.
\begin{definition}[Keypoint distribution]
A keypoint distribution is a distribution that factors as a sum of keypoints with disjoint support.
\begin{equation}
    p(x) = \sum_{k=1}^K f_k(x)
\end{equation}
\end{definition}
\begin{theorem}
    The set of local maxima of a keypoint distribution $p$ is 
    \begin{equation}
        \mu_p = \{\mu_{f_k}\}_{k=1}^K.
    \end{equation}
\end{theorem}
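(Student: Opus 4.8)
The plan is to show the two inclusions $\mu_p \subseteq \{\mu_{f_k}\}_{k=1}^K$ and $\{\mu_{f_k}\}_{k=1}^K \subseteq \mu_p$ separately, working locally thanks to the disjoint-support assumption. The key structural fact I would exploit is that since the keypoints $f_k$ have disjoint supports, around any point $x_0$ at most one summand is nonzero on a neighbourhood of $x_0$ — except possibly at boundary points of the supports, where $p$ vanishes along with all the $f_k$ touching that point. So the analysis splits into three regimes for a candidate local maximum $x_0$: (i) $x_0$ lies in the interior of $\operatorname{supp} f_k$ for some $k$, (ii) $x_0$ lies in the interior of the common zero set (outside all supports), or (iii) $x_0$ is on the boundary of some support.

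For the inclusion $\{\mu_{f_k}\} \subseteq \mu_p$: fix $k$ and let $\mu_{f_k}$ be the mode of $f_k$. Since $f_k \ge 0$ is unimodal with compact support around its mode, $\mu_{f_k}$ lies in the interior of $\operatorname{supp} f_k$ (a mode with positive value cannot sit on the boundary where $f_k = 0$ by continuity). On a small enough neighbourhood $\mathcal{X}$ of $\mu_{f_k}$ contained in $\operatorname{supp} f_k$ and disjoint from every other support, we have $p|_{\mathcal{X}} = f_k|_{\mathcal{X}}$, so $\mu_{f_k}$ is a local maximum of $p$. For the reverse inclusion: let $x_0 \in \mu_p$. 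In regime (ii), $p \equiv 0$ on a neighbourhood of $x_0$, so $x_0$ is not a strict local max and — under the natural reading that "local maxima" means strict/isolated local maxima of a keypoint-shaped function, or that such flat plateaus outside supports are excluded by the mild assumptions — this case is ruled out. In regime (i), $p$ agrees with a single $f_k$ near $x_0$, and since $f_k$ is unimodal its only local maximum is $\mu_{f_k}$, forcing $x_0 = \mu_{f_k}$. In regime (iii), $p(x_0) = 0$ but $p$ takes positive values arbitrarily close by (from the interior side of the adjacent support), so $x_0$ is not a local maximum.

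I would state explicitly the mild assumptions alluded to in the theorem: namely that we only count as local maxima those points where $p$ attains a value strictly greater than neighbouring values in at least one direction (or equivalently, isolated local maxima), which is what the notion of "keypoint" is morally capturing, and that each support is the closure of its interior so there are no degenerate lower-dimensional support components. With these in place the three-regime argument closes cleanly, using only continuity, unimodality, and disjointness of supports; $\mathcal{C}^2$ regularity is not actually needed for this particular statement (it is presumably used elsewhere in the appendix for the partner/subsumed analysis).

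The main obstacle I anticipate is the boundary case (iii) and the precise handling of the flat region (ii): one must be careful that "local maximum" is defined so that a point on a plateau at height $0$ surrounded by a region where $p = 0$ does not count, otherwise the theorem is literally false (every point outside all supports would be a local maximum in the non-strict sense). So the real work is pinning down the definition of local maximum consistently with the "mild assumptions" phrasing, after which each regime is a one-line continuity/unimodality argument. I would present the proof as: (1) recall the disjoint-support decomposition and the consequent local triviality of $p$; (2) do the easy inclusion via the interior neighbourhood of each $\mu_{f_k}$; (3) do the hard inclusion by the three-regime case split, dispatching (ii) and (iii) by continuity and (i) by unimodality.
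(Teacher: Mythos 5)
Your proof is correct and follows the same essential route as the paper's: the disjoint supports force $p$ to agree locally with a single $f_k$, so the local maxima of $p$ are exactly the modes of the summands. The paper's own proof is a one-liner that leaves implicit exactly the points you flesh out (the zero plateau outside all supports and the support boundaries), so your three-regime case split is a faithful, more careful rendering of the same argument rather than a different one.
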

\begin{proof}
    This follows immediately from the fact that $f_k$ have disjoint support, hence $\forall x$ $p(x) = f_k$ for some $k$.
\end{proof}

\begin{definition}[Partner distributions]
Let $p$ and $q$ be keypoint distributions, and let all $f$ in $p$ and $g$ in $q$ be partners. We then call $p$ and $q$ \emph{partners}.
\end{definition}

One can wonder whether we might get additional local maxima from the pointwise-max operation.
This is not the case, as the following simple theorem shows.
\begin{theorem}[No extra maxima]
\label{thm:no-more-max}
    Let $p,q$ be $\mathcal{C}^2$.
    Then $m(x) =\max(p(x),q(x))$ does not contain any local maxima that are not in $\mu_p \cup \mu_q$.
\end{theorem}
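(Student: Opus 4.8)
The plan is to argue by contradiction using the standard second-derivative (or rather first-derivative) characterization of local maxima, exploiting the fact that $m = \max(p,q)$ agrees locally with either $p$ or $q$ except on the ``switching set'' $\{p=q\}$. Suppose $x_0$ is a local maximum of $m$ with $x_0 \notin \mu_p \cup \mu_q$. We split into cases according to the relation between $p(x_0)$ and $q(x_0)$.

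First I would dispose of the strict case: if $p(x_0) > q(x_0)$, then by continuity of $p$ and $q$ there is a neighbourhood $\mathcal{U}$ of $x_0$ on which $p > q$, hence $m \equiv p$ on $\mathcal{U}$. Then $x_0$ being a local max of $m$ forces $x_0$ to be a local max of $p$, and since $p$ is a keypoint distribution (a sum of unimodal keypoints with disjoint support), its only local maxima are the modes $\mu_{f_k}$, so $x_0 \in \mu_p$ --- contradiction. The symmetric argument handles $q(x_0) > p(x_0)$. So we may assume $p(x_0) = q(x_0) =: c$.

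The remaining case $p(x_0) = q(x_0)$ is the heart of the argument. Here $m(x) \ge p(x)$ and $m(x) \ge q(x)$ everywhere, with equality at $x_0$; since $x_0$ is a local max of $m$, we have $p(x) \le m(x) \le m(x_0) = p(x_0)$ on a neighbourhood, so $x_0$ is \emph{also} a local max of $p$, and likewise of $q$. Then again by the keypoint-distribution structure $x_0 \in \mu_p$ and $x_0 \in \mu_q$, contradicting $x_0 \notin \mu_p \cup \mu_q$. (One should note this case actually shows something slightly stronger --- a ``crossing'' point can only be a local max of $m$ if it is simultaneously a mode of both $p$ and $q$ --- which is exactly the content implicitly needed for the companion statement about partner distributions preserving all local maxima.) The $\mathcal{C}^2$ hypothesis is not strictly needed for this direction; continuity plus the structural theorem on local maxima of keypoint distributions suffices, though invoking $\mathcal{C}^2$ keeps the statement uniform with \Cref{fig:subsumed} and the surrounding development.

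The main obstacle, and the only place requiring care, is making precise that ``$x_0$ is a local maximum of $m$ and $m \ge p$ with $m(x_0) = p(x_0)$ implies $x_0$ is a local maximum of $p$'' --- this is elementary but must be stated cleanly, since it is what collapses both the strict and the equality cases onto the already-proved structural theorem about $\mu_p$. Everything else is bookkeeping over the three sign cases, so I would keep the write-up to a few lines per case and lean entirely on the disjoint-support theorem proved just above.
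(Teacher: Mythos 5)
Your proof is correct and takes essentially the same route as the paper's: the same trichotomy on the sign of $p(x_0)-q(x_0)$, with the strict cases handled by locally identifying $m$ with $p$ (or $q$), and the equality case by the observation that $p \le m$ together with $p(x_0)=m(x_0)$ forces a local maximum of $m$ at $x_0$ to be a local maximum of $p$. The only cosmetic difference is your extra detour through the structural theorem on keypoint distributions, which is unnecessary once $\mu_p$ is read simply as the set of local maxima of the $\mathcal{C}^2$ function $p$.
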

\begin{proof}
    Assume there is a local maximum $y$, $y \notin \mu_p, y\notin \mu_q$, of $m$.
    If $p(y) > q(y)$ then there exists a local neighbourhood $\mathcal{Y}$ of $y$ such that $m(y)=p(y),\forall y\in \mathcal{Y}$ . 
    However, this would imply that $y$ is a local maximum of $p$, which is a contradiction.
    The same argument applies for $q(y) > p(y)$.
    If no such neighbourhood exists, then $m(y)=p(y)=q(y)$ and such points cannot be local maxima as $p(y) \leq m(y)$ and if $y$ is not a local maximum of $p$ it cannot be a local maximum of $m$.
\end{proof}

Next we use the above to show that the pointwise max of partner keypoints have exactly the union of their original local maxima. 
\begin{theorem}
    Let $f,g$ be partner keypoints.
    Then the set of local maxima of $m(x) =\max(f(x),g(x))$ is $\{\mu_f, \mu_g\}$.
\end{theorem}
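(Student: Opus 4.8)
The plan is to combine the two theorems already established — Theorem~\ref{thm:no-more-max} (no extra maxima appear) with an argument that none of the original maxima are lost — to pin down the local maxima of $m = \max(f,g)$ exactly. Theorem~\ref{thm:no-more-max} immediately gives one inclusion: the set of local maxima of $m$ is contained in $\mu_p \cup \mu_q = \{\mu_f, \mu_g\}$ (here each keypoint distribution has a single keypoint, so this is just the two modes). So the only thing left to prove is the reverse inclusion: both $\mu_f$ and $\mu_g$ are in fact local maxima of $m$.

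First I would show $\mu_f$ is a local maximum of $m$. By the partner assumption (Definition~\ref{def:partner-kps}), $f$ is not subsumed by $g$, which by the definition of subsumed means there \emph{is} a neighbourhood $\mathcal{X} \ni \mu_f$ on which $f(\mu_f) \ge g(\mu_f)$ — more precisely, unwinding the double negation, there is a neighbourhood of $\mu_f$ on which $f$ dominates, so that $m = f$ there in a neighbourhood of $\mu_f$. Wait — I need to be a little careful: the definition only gives $f(\mu_f) \ge g(\mu_f)$ at the point $\mu_f$ itself (or on some neighbourhood, depending on how one reads the quantifier). The clean way is: since $f$ is not subsumed by $g$, we have $f(\mu_f) \ge g(\mu_f)$, hence $m(\mu_f) = f(\mu_f)$. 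Then by continuity of $f$ and $g$ and the fact that $\mu_f$ is the strict mode of the unimodal $f$ on its compact support, $f(x) \le f(\mu_f)$ everywhere, so $m(\mu_f) = f(\mu_f) \ge f(x)$ for all $x$; combined with $m(x) = \max(f(x), g(x))$ this only shows $m(\mu_f) \ge f(x)$, not $m(\mu_f) \ge g(x)$ — so I do need the neighbourhood version. The argument is: on a small enough neighbourhood $\mathcal{X}$ of $\mu_f$, either $f > g$ throughout (in which case $m = f$ locally and $\mu_f$ is a local max of $m$ since it is one of $f$), or $g(\mu_f) = f(\mu_f)$ and $g$ touches from below; in the latter case $m(\mu_f) = f(\mu_f) = g(\mu_f) \ge g(x)$ fails in general, so one must argue that $g$ cannot strictly exceed $f$ arbitrarily close to $\mu_f$ without $f$ being subsumed. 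This is exactly what ``not subsumed'' rules out, so $m$ has a local maximum at $\mu_f$. By symmetry the same holds for $\mu_g$.

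Putting the two inclusions together yields that the set of local maxima of $m$ is exactly $\{\mu_f, \mu_g\}$, which is the claim. The main obstacle I anticipate is the subsumed/partner bookkeeping in the previous paragraph: the definition of ``subsumed'' is stated with a slightly delicate quantifier structure (``there is no neighbourhood $\mathcal{X}$ such that $f(\mu_f) \ge g(\mu_f)$''), and turning ``$f$ is not subsumed by $g$'' into the usable statement ``$m$ coincides with $f$ on a neighbourhood of $\mu_f$, up to possibly touching $g$ from below at $\mu_f$'' requires reading that definition correctly and invoking the $\mathcal{C}^2$/unimodality hypotheses to handle the boundary case where $f(\mu_f) = g(\mu_f)$. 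Everything else is a one-line appeal to the two theorems already in hand.

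\begin{proof}
By Theorem~\ref{thm:no-more-max} applied to the (single-keypoint) keypoint distributions $f$ and $g$, every local maximum of $m = \max(f,g)$ lies in $\mu_f \cup \mu_g = \{\mu_f,\mu_g\}$. It remains to show that $\mu_f$ and $\mu_g$ are both local maxima of $m$. By symmetry it suffices to treat $\mu_f$. Since $f$ and $g$ are partners, $f$ is not subsumed by $g$, so there is a neighbourhood $\mathcal{X}\ni\mu_f$ with $f(\mu_f)\ge g(\mu_f)$, hence $m(\mu_f)=f(\mu_f)$. Because $f$ is unimodal with mode $\mu_f$, we have $f(x)\le f(\mu_f)$ for all $x$, and ``not subsumed'' ensures $g$ does not strictly exceed $f$ on any punctured neighbourhood of $\mu_f$; therefore $m(x)=\max(f(x),g(x))\le f(\mu_f)=m(\mu_f)$ for $x$ in a neighbourhood of $\mu_f$, so $\mu_f$ is a local maximum of $m$. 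The same argument gives that $\mu_g$ is a local maximum of $m$. Combining with the first paragraph, the set of local maxima of $m$ is exactly $\{\mu_f,\mu_g\}$.
\end{proof}
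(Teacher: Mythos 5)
Your proof is correct and follows essentially the same route as the paper's: use the ``not subsumed'' condition to extract a neighbourhood of $\mu_f$ on which $f\ge g$, so that $m=f$ there and $\mu_f$ is a local maximum of $m$ (symmetrically for $\mu_g$), then invoke Theorem~\ref{thm:no-more-max} to exclude any other local maxima. Your extended worrying about the quantifier in the definition of ``subsumed'' resolves to exactly the reading the paper uses (domination of $f$ over $g$ on a whole neighbourhood of $\mu_f$, not just at the point), so no substantive difference remains.
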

\begin{proof}
    Let $x = \mu_f$. As $f$ is not subsumed by $g$ there exists some neighbourhood $\mathcal{X}$ around $x$ where $f\geq g$, hence $m(x) = f(x) ,\forall x \in \mathcal{X}$, as $x$ is a local maximum of $f$ it is also a local maximum of $m$. 
    The same argument holds in the other direction, hence $\mu_g$ is also a local maximum of $m$.
    By~\Cref{thm:no-more-max} there are no other maxima.
    Hence the local maxima of $m$ are $\{\mu_f, \mu_g\}$.
\end{proof}

Now we put together the pieces, and show that partner distributions have exactly the union of the original keypoint distributions' local maxima.

\begin{theorem}[Max retains local maxima]
Let $p$ and $q$ be partners.
Then the set of local maxima of $\max(p,q)$ is the same as the union of the local maxima of $p$ and $q$ seperately.
\end{theorem}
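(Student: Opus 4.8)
The plan is to assemble the final statement from the pieces already established, using the disjoint-support structure of keypoint distributions together with the two-keypoint max theorem. First I would fix partner distributions $p = \sum_{j=1}^J f_j$ and $q = \sum_{k=1}^K g_k$, each a sum of keypoints with disjoint support within itself. By the earlier characterization theorems, $\mu_p = \{\mu_{f_j}\}_{j=1}^J$ and $\mu_q = \{\mu_{g_k}\}_{k=1}^K$, so the target is to show the set of local maxima of $m(x) := \max(p(x), q(x))$ equals $\{\mu_{f_j}\}_j \cup \{\mu_{g_k}\}_k$.

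The inclusion $\mu_m \subseteq \mu_p \cup \mu_q$ is immediate from \Cref{thm:no-more-max}, since $p$ and $q$ are $\mathcal{C}^2$ (each being a finite sum of $\mathcal{C}^2$ keypoints). For the reverse inclusion, I would take a mode $\mu_{f_j}$ of $p$ and argue it is a local maximum of $m$. Near $\mu_{f_j}$, disjointness of the supports of the $f_i$ means $p$ coincides with $f_j$ on a neighbourhood, so it suffices to compare $f_j$ against $q$ locally. The key point is that $q$, restricted to a small enough neighbourhood of $\mu_{f_j}$, coincides with at most one of its summands $g_k$ (or is zero there) — again by disjoint support — and the partner hypothesis says $f_j$ is not subsumed by that $g_k$, so there is a neighbourhood on which $f_j \ge g_k$, hence $m = f_j$ there, making $\mu_{f_j}$ a local maximum of $m$. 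The symmetric argument handles each $\mu_{g_k}$. Combining the two inclusions gives the claim.

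The main obstacle — and the place where I would be most careful — is the local reduction from distributions to single keypoints: one must check that in a sufficiently small neighbourhood of a mode $\mu_{f_j}$, both $p$ and $q$ are governed by a single summand each, so that the previously proved two-keypoint result ($\max$ of partner keypoints has local maxima exactly $\{\mu_f, \mu_g\}$) can be invoked verbatim. This hinges on the supports being disjoint and the modes lying in the interior of their supports (so that a mode of $f_j$ is not on the boundary of some other $f_i$'s support); a clean way to handle the edge cases is to note that where $q$ vanishes near $\mu_{f_j}$, the inequality $f_j \ge q$ is trivial, and where exactly one $g_k$ is active, the partner definition applies directly. Once this localization is made precise, the rest is a routine patching of the two inclusions together.
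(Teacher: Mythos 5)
Your proposal is correct and follows essentially the same route as the paper's proof: use the disjoint-support structure to localize to a single summand of each distribution near every mode, invoke the two-keypoint partner theorem for the forward inclusion of each mode, and \Cref{thm:no-more-max} to rule out new maxima. Your write-up is in fact more careful than the paper's (which compresses the localization step into one line), and the edge cases you flag—neighbourhoods where $q$ vanishes, and modes near support boundaries—are exactly the details needed to make the argument fully rigorous.
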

\begin{proof}
    For partner distributions, all keypoints are partners. 
    As $f_k$ have disjoint support we can write the local maxima as
    \begin{equation}
        \cup_k (\{\mu_{f_k}\} \cup \{\mu_{g_j}\}_j) = \{\mu_{f_k}\}_k \cup \{\mu_{g_j}\}_j.
    \end{equation}
    Which is what we wanted to show.
\end{proof}

\section{Further Architectural Details}
\label{suppl:arch}
For our all our experiments we use the DeDoDe-S model.
This model has an encoder-decoder architecture.
The encoder is a VGG11~\citep{simonyan2015vgg} network, from which the layers corresponding to strides $\{1,2,4,8\}$ are kept. 
These have dimensionality $\{64, 128, 256, 512\}$ respectively.
The decoder at each stride consists of 3 [5x5 Depthwise Convolution, BatchNorm, ReLU, 1x1 Convolution] blocks. 
The output of each such block is $\{1, 32+1, 128+1, 256+1\}$ dimensional, and contains context that is upsampled, as well as keypoint distribution logits. 
The input at each stride is the output of the corresponding encoder, and for stride $\{1,2,4\}$ additionally also the outputs of the previous decoder layer. The input dimensionality for each stride is thus $\{64+32, 128+128, 256+256, 512\}$. 
The internal dimensionality of the decoders is $\{32, 64, 256, 512\}$.

\section{Runtime Comparisons}
\label{suppl:runtime}
In \Cref{tab:runtime} we compare the runtime of different detectors on a A100 for a batchsize of 1 (a single image). 
We use the same inference settings as in our SotA experiments (see\Cref{sec:inference-settings}, and \Cref{suppl:inference} for details).
As can be seen in the figure \ours~has a similar runtime as previous detectors.

\begin{table}[]
    \centering
    \caption{\textbf{Runtime comparison.} We measure the runtime for a batchsize of 1 on a A100 GPU for detectors using the same inference settings as in the SotA comparisons. Measured in milliseconds (lower is better).}
    \begin{tabular}{l r}
        \toprule
        Method & Runtime (ms) \\
        \midrule
         ALIKED & 8.89\\
         REKD & 1499.78 \\
         DISK & 14.30\\
         SuperPoint & 6.46\\
         DeDoDe-v2 & 42.76\\
         \ours & 18.74\\
         \bottomrule
    \end{tabular}
    \label{tab:runtime}
\end{table}
\section{Inference Settings}
\label{suppl:inference}
\paragraph{DeDoDe v2:} We use the same settings as recommended by the authors, using a $784\times 784$ resolution, with a NMS radius of $3\times 3$.
\paragraph{ALIKED, SuperPoint, DISK:} We use the same settings as in LightGlue~\citep{lindenberger2023lightglue} and resize the longer side of the image to 1024. For ALIKED a 5x5 NMS, SuperPoint a 4x4 NMS, DISK a 5x5 NMS, and for SIFT no additional NMS is used.
\paragraph{REKD:} We use the original image size, as recommended by the authors.

\section{Evaluation Protocol}
\label{suppl:eval-protocol}
\paragraph{Pose Estimation:}
We follow LightGlue~\citep{lindenberger2023lightglue} and use PoseLib (version 2.0.4 via pip) for pose estimation. We use the default settings of PoseLib for RANSAC, and set the threshold to $2$ pixels for all benchmarks. To reduce noise, we additionally run all RANSAC 5 times, permuting the correspondences to induce randomness in PoseLib.

\paragraph{Evaluation Metrics:}
For Fundamental matrix and Essential matrix estimation we report the AUC\@$5^\circ$ of the pose error, where the pose error 
\begin{equation}
    \epsilon_{\rm pose}((\hat{\mathbf{q}},\hat{\mathbf{t}}), (\mathbf{q}_{\rm GT},\mathbf{t}_{\rm GT}))= \max(\angle(\hat{\mathbf{t}}, \mathbf{t}_{\rm GT}), \angle(\hat{\mathbf{q}},\mathbf{q}_{\rm GT})),
\end{equation}
i.e., the maximum of the rotational angle and the translational angle. Note that both the rotational and the translational error are dimensionless as two-view geometry is only defined up-to-scale.
For Essential matrix estimation the estimated pose $(\hat{\mathbf{q}},\hat{\mathbf{t}})$ is retrieved by disambiguation of the 4 possible relative poses using that $\mathbf{E} = [\mathbf{t}]_{\times}\mathbf{R}$. For the Fundamental matrix the process is the same, where the Essential matrix is computed by decomposing $\hat{\mathbf{E}} = \mathbf{K}^{-1} \hat{\mathbf{F}}\mathbf{K}$.

For Homography Estimation we report the AUC\@3 pixels of the corner end-point-error, which is defined as
\begin{equation}
    \epsilon_{\rm epe}(\hat{\mathbf{H}}, \mathbf{H}_{\rm GT}) = \frac{1}{4}\sum_{\mathbf{x}\in\text{corners}} \lVert \pi(\hat{\mathbf{H}}[\mathbf{x};1])- \pi(\mathbf{H}_{\rm GT}[\mathbf{x};1])\lVert_2,
\end{equation}
where $\pi: [x;y;z] \to [x/z;y/z]$ is the projection operator. We additionally follow previous work~\citep{sun2021loftr, edstedt2024roma} and normalize the epe error by $480/\min(H,W)$.

\begin{figure*}
    \centering
    \includegraphics[width=\linewidth]{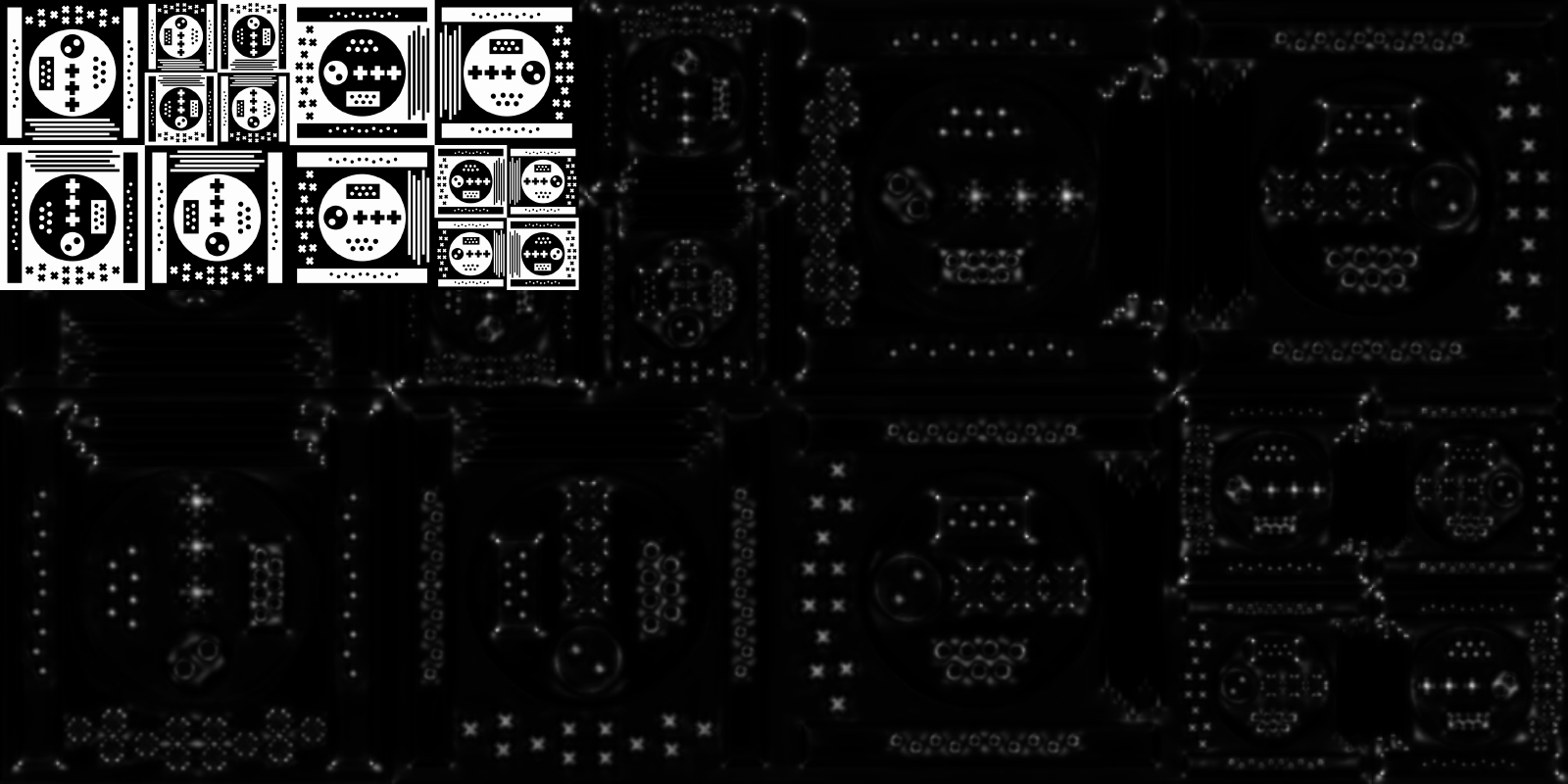}
    \caption{
    \textbf{Scoremap distribution $p_\theta$ of \ours.} 
    \emph{Overlay upper left:} Input image. \emph{Background:} Learned scoremap distribution $p_{\theta}$ of \ours. \ours is distilled from two self-supervised keypoint detectors (\Cref{sec:keypoint}) trained via reinforcement learning (\Cref{sec:rl}) to maximize per-image reward (\Cref{sec:reward}) using off-policy top-k sampling at local maxima (\Cref{sec:sampling}), with some regularization (\Cref{sec:regularization}), resulting in a simple objective (\Cref{sec:objective}). Intriguingly, several types (Light/Dark, Bouba/Kiki) of detectors emerge from this objective (\Cref{sec:emerge}). We find we can harness these into a single, more diverse detector, through knowledge distillation (\Cref{sec:distill}).
    }
    \label{fig:scoremap}
\end{figure*}
\section{Further Qualitative Results}
We show a qualitative example of \ours~scoremap in~\Cref{fig:scoremap}.

\end{document}